\newtheorem{lemma}{Lemma}
\newcommand{\rebuttal}[1]{\textcolor{black}{#1}}
\title{FedRolex: Model-Heterogeneous Federated Learning with Rolling Sub-Model Extraction}
\author{
    Samiul Alam$^{1,2}$, Luyang Liu$^3$, Ming Yan$^{4,1}$, Mi Zhang$^{2,1}$\\
    \quad $^1$Michigan State University, $^2$The Ohio State University, \\\quad $^3$Google Research, \quad $^4$The Chinese University of Hong Kong, Shenzhen \\
    %{\small\texttt{zhengy30@msu.edu, zhiz@amazon.com, \{yanshen6, mizhang\}@msu.edu}}
    {\small\texttt{alamsami@msu.edu, luyangliu@google.com, yanming@cuhk.edu.cn, mizhang.1@osu.edu}}
}
\begin{document}

%% CR is 10 pages
\maketitle
\begin{abstract}

%% 1. Background.
%Federated learning (FL) is a privacy-preserving machine learning paradigm that trains a machine learning model on a federation of distributed clients while keeping their data locally.  %The process can be formulated as solving an optimization problem under constraint of minimizing communication overhead while ensuring robustness of model with heterogeneous data and privacy. However, in practical settings, even these constraints can be an oversimplification.
%%
%% 2. Limitations of existing works.
Most cross-device federated learning (FL) studies focus on the model-homogeneous setting where the global server model and local client models are identical. However, such constraint not only excludes low-end clients who would otherwise make unique contributions to model training but also restrains clients from training large models due to on-device resource bottlenecks.
%
%However, in real-world scenarios, such a requirement acts as a constraint that restricts the outreach to clients with heterogeneous device resources and unfairly excludes users with low-end devices who would otherwise benefit from FL. 
%Most FL research focus on training a single global model with the assumption of data heterogeneity. But in practical scenarios, the edge devices in the systems where these algorithms would be deployed will have vastly different compute capabilities. Having the same model deployed on all devices will restrict the outreach of the system and unfairly deprive low-end device users of the benefits of FL. It is therefore important to also optimize for model heterogeneity. 
%%
%% 3. Overview of our method: the key problem it addresses and its key techniques.
In this work, we propose \texttt{FedRolex}, a partial training (PT)-based approach that enables model-heterogeneous FL and can train a global server model larger than the largest client model. At its core, \texttt{FedRolex} employs a rolling sub-model extraction scheme that allows different parts of the global server model to be evenly trained, which mitigates the client drift induced by the inconsistency between individual client models and server model architectures. 
%\Luyang{Reorder this following part a bit, PTAL.}
% We provide a theoretical statistical analysis on its advantage over Federated Dropout. 
%
%Unlike the model-homogeneous scenario, the fundamental challenge of model heterogeneity in FL is that different parts of the global model are trained on heterogeneous data distributions. Our method addresses this challenge by rolling the sub-model in each federated iteration so that the parameters of the global model are evenly trained on the global data distribution across all devices.
%making it more akin to model-homogeneous training.
%Cognizant of the data heterogeneity property of FL,  
% However, model heterogeneity raises a key issue concerning optimal knowledge distillation. Unlike the homogeneous scenario, different parts of the global model are essentially trained on heterogeneous data distributions. 
% to distill knowledge evenly throughout the global model by rolling the model in every round so that all parts of the model see the entire data distribution making it more akin to homogeneous training.
%%
%% 4. Summary of our experiments and results.
%Empirically, 
We show that \texttt{FedRolex} outperforms state-of-the-art PT-based model-heterogeneous FL methods (e.g. Federated Dropout) and reduces the gap between model-heterogeneous and model-homogeneous FL, especially under the large-model large-dataset regime. In addition, we provide theoretical statistical analysis on its advantage over Federated Dropout and evaluate \texttt{FedRolex} on an emulated real-world device distribution to show that \texttt{FedRolex} can enhance the inclusiveness of FL and boost the performance of low-end devices that would otherwise not benefit from FL.
%we consider the distribution of client capabilities similar to real-world income distribution. Our results consistently improve the accuracy of low-end devices, which enhances the inclusiveness of federated learning.
%We evaluate our method with image classification and language modelling tasks using ResNet and Transformer models respectively. We also perform several ablation studies comparing performance to homogeneous setting, the accuracy of the algorithm in a real world model capacity distribution and the effect of the size of the global server model.
%which considerably enhances the inclusiveness of FL
%
Our code is available at: \href{https://github.com/AIoT-MLSys-Lab/FedRolex}{https://github.com/AIoT-MLSys-Lab/FedRolex}.
\end{abstract}

\section{Introduction} \label{intro}
\vspace{-1mm}

%\Luyang{We should cite \cite{wang2021field} somewhere.}

% P1: Brief background of FL: 2-3 sentences. Most existing works focus on model homogeneity. Talk about its limitations.
Federated learning (FL) is a machine learning paradigm that trains models from distributed clients with private data under the coordination of a central server \cite{kairouz2021advances, wang2021field}. In this work, we focus on cross-device FL where clients are usually resource-constrained edge devices.
%This idea has far-reaching applications and is made urgent by modern demands for privacy. 
%Indeed, the goal of strong data privacy is a central motivation for FL. By storing data locally, instead of replicating them on a remote server, the system's attack surface is decreased \citep{jere2020taxonomy}, and by using focused ephemeral updates and early aggregation, the communication cost is also minimized \citep{nishio2019client}. 
%This is especially appealing in cases of very sensitive data such as legal, financial, and medical data. 
%It also allows training through collaboration between companies, businesses, or medical institutions which cannot share data due to confidentiality or legal constraints \cite{yang2019federated, rieke2020future}. Stronger privacy properties are possible when FL is combined with other technologies such as differential privacy and secure multiparty computation (SMPC) protocols such as secure aggregation \cite{bonawitz2017practical}. The need to adhere to these privacy principles and ensure compatibility with other privacy technologies puts additional constraints on the design space for federated optimization algorithms. 
%
The majority of existing cross-device FL studies focus on the \textit{model-homogeneous} setting \cite{mcmahan2017communication, li2020federated, karimireddy2020scaffold, chen2020fedbe}, in which the server model and the client models across all the participating client devices are \textit{identical}. 
% and the server
%In standard FL, it is assumed that the data distributions are dissimilar across users but the underlying models deployed to all user devices is the same. 
However, model-homogeneous FL are confronted with two fundamental constraints:
(1) \textbf{device heterogeneity} is a more realistic consideration when deploying FL systems in real-world applications: different client devices could have very diverse on-device resources and are only capable of training models with capacities that match their on-device resources. Having the same model on all the devices would, unfortunately, exclude clients with low-end devices who would otherwise make unique contributions to model training from their own local data;
%restricts the outreach to clients with heterogeneous device resources and 
%
(2) state-of-the-art machine learning has moved towards~\textbf{large models}~\cite{bommasani2021opportunities} such as Transformer \cite{vaswani2017attention}. Restricting server and client models to be the same inevitably causes model-homogeneous FL to fail to train such large models due to the resource constraint of client devices.
%compromise between efficiently using compute resources and using the data across clients. %The primary challenge in designing algorithms for model heterogeneous systems is efficiently training the large model from the smaller client models. 

\begin{table}
\centering
\caption{Comparison of \texttt{FedRolex} with model-homogeneous and model-heterogeneous FL methods.}
\label{tab:requirement_comparison_methods}
\resizebox{1.0\linewidth}{!}{%
\begin{tabular}{lcccccc} 
\toprule
\multicolumn{1}{c}{} & \begin{tabular}[c]{@{}c@{}}\textbf{Model}\\\textbf{ Heterogeneity}\end{tabular} & \begin{tabular}[c]{@{}c@{}}\textbf{Aggregation}\\\textbf{Scheme}\end{tabular} & \begin{tabular}[c]{@{}c@{}}\textbf{Sub-model }\\\textbf{Extraction Scheme}\end{tabular} & \begin{tabular}[c]{@{}c@{}}\textbf{ Need of }\\\textbf{ Public Data }\end{tabular} & \begin{tabular}[c]{@{}c@{}}\textbf{Server Model }\\\textbf{ Size}\end{tabular} & \multicolumn{1}{l}{\begin{tabular}[c]{@{}l@{}}\textbf{Compatibility with }\\\textbf{Secure Aggregation}\end{tabular}} \\ 
\cmidrule{1-7}
FedAvg \cite{mcmahan2017communication} & \multirow{4}{*}{No} & \multirow{4}{*}{-} & \multirow{4}{*}{-} & No & $=$ Client Model & Yes \\
FedProx \cite{li2020federated} &  &  &  & No & $=$ Client Model & Yes \\
SCAFFOLD \cite{karimireddy2020scaffold} &  &  &  & No & $=$ Client Model & Yes \\
FedBE \cite{chen2020fedbe} &  &  &  & Unlabeled & $=$ Client Model & No \\ 
\cmidrule{1-7}
FedGKT \cite{he2020group} & \multirow{4}{*}{Yes} & \multirow{4}{*}{\begin{tabular}[c]{@{}c@{}}Knowledge \\Distillation \end{tabular}} & \multirow{4}{*}{-} & No & $\geq$ Largest Client Model & No \\
% FedGKT \cite{he2020group} &  &  &  & No & $\geq$ Largest Client Model & No \\
FedDF \cite{lin2020ensemble} &  &  &  & Unlabeled & $=$ Largest Client Model & No \\
DS-FL \cite{itahara2020distillation} &  &  &  & Unlabeled & $=$ Largest Client Model & No \\
Fed-ET \cite{cho2022heterogeneous} &  &  &  & Unlabeled & $\geq$ Largest Client Model & No \\ 
\cmidrule{1-7}
Federated Dropout \citep{caldas2018expanding} & \multirow{4}{*}{Yes} & \multirow{4}{*}{\begin{tabular}[c]{@{}c@{}}Partial \\Training\end{tabular}} & Random & No & $\geq$ Largest Client Model & Yes \\
HeteroFL \cite{diao2020heterofl} &  &  & Static & No & $=$ Largest Client Model & Yes \\
FjORD \cite{horvath2021fjord} &  &  & Static & No & $=$ Largest Client Model & Yes \\
\textbf{FedRolex (Our Approach)} &  &  & \textbf{Rolling} & \textbf{No} & \textbf{$\geq$ Largest Client Model} & \textbf{Yes} \\
\bottomrule
\end{tabular}
}
\vspace{-4mm}
\end{table}

To relax the fundamental constraints of model-homogeneous FL, \textbf{model-heterogeneous FL} was proposed where heterogeneous models with different capacities across the server and the clients are trained during the federated training process. 
%on edge devices with heterogeneous resources during the federated training process. 
%are considered to have different compute capabilities and hence the limit to the size of the models that can be deployed on them is different.
%
One primary challenge in model-heterogeneous FL is the aggregation of heterogeneous client models.
%and server model updates
To address this challenge, knowledge distillation (KD)-based approaches have been proposed \citep{he2020group,lin2020ensemble,  itahara2020distillation, cho2022heterogeneous}, in which the client models serve as teachers and the server ensembles the knowledge distilled from the individual client models. 
%Building on this concept, there have been several studies. For example, \citet{he2020group} proposed FedGKT which used group knowledge transfer and \cite{lopes2017data} introduced generator based knowledge distillation called FedGen.
However, KD-based approaches in general require public data on the server to achieve competitive model accuracy, whereas the desired public data may not be always available in practice.
%for effective knowledge distillation, which is not always practical. 
Moreover, since KD-based approaches need the individual client models (whole models or prediction layers) or their outputs to be sent to the server, they are incompatible with secure aggregation protocols \citep{bonawitz2016practical}, which limits their privacy guarantee.
%KD becomes computationally expensive when the cohort size in each FL round scales up. 
%\Samiul{Another limitation of KD is its incompatibility with secure aggregation protocols \cite{bonawitz2017practical}. 
%By secure aggregation, the information flow from the client to the server can be protected. Typically, this means that the server cannot see the individual client model weights or reverse-engineer them from information gathered from the clients. 
%KD-based methods however require the individual models to be sent to the server and cannot implicitly integrate secure aggregation into the system which limits their privacy guarantee.}
%.. 
%running several models pulled from the client pool to be executed on the server for knowledge distillation. 
%
To remove the dependency on public data and ensure compatibility with secure aggregation,  partial training (PT)-based approaches such as random sub-model extraction (Federated Dropout \citep{caldas2018expanding}) and static sub-model extraction (HeteroFL \citep{diao2020heterofl}, FjORD \citep{horvath2021fjord}) were proposed. In these approaches, each client trains a smaller sub-model extracted from the larger global server model, and the server model is updated by aggregating those trained sub-models. 
%ordered dropout ()
However, the fundamental issue of existing PT-based methods is that the sub-models are extracted in ways (either random or static) such that the parameters of the global server model are \textit{not evenly trained}. This makes the server model vulnerable to client drift\footnote{In model-homogeneous FL, client drift is primarily induced by \textit{data heterogeneity} across clients. In model-heterogeneous FL, \textit{model heterogeneity} across clients is \textit{another} critical source that induces client drift.} induced by the \textit{inconsistency between individual client model and server model architectures} -- a unique challenge of model-heterogeneous FL.
In this work, we propose a PT-based model-heterogeneous FL approach named \texttt{FedRolex} to tackle the fundamental issue of existing methods.
The key difference between \texttt{FedRolex} and existing PT-based methods is how the sub-models are extracted for each client over communication rounds in the federated training process.
Specifically, instead of extracting sub-models in either random or static manner, \texttt{FedRolex} proposes a \textbf{rolling sub-model extraction} scheme, where the sub-model is extracted from the global server model using a rolling window that advances in each communication round. Since the  window is rolling, sub-models from different parts of the global model are extracted in sequence in different rounds. As a result, all the parameters of the global server model are \textit{evenly trained} over the local data of client devices.
%during the federated training process. %to mitigate client drift induced by model heterogeneity. Throughout communication rounds, all the global model parameters are updated over the entire data distribution. 

%
The proposed rolling sub-model extraction scheme, though simple, has equipped \texttt{FedRolex} with multifold merits compared to prior arts (Table \ref{tab:requirement_comparison_methods}): 
(1) \texttt{FedRolex} enables different parts of the global server model to be evenly trained, which mitigates the client drift induced by model heterogeneity. 
(2) Contrary to static sub-model extraction approaches (HeteroFL, FjORD), \texttt{FedRolex} is able to train a global server model that is \textit{larger} than the largest client model, enabling FL to benefit from the superior performance brought by large models. It echoes some concurrent efforts in developing FL primitives to support training large server models in cross-device settings, e.g. Federated Select~\citep{charles2022federated}. 
%\texttt{FedRolex} can support a larger global model. 
%where the global server model cannot be larger that the largest client model
(3) Compared to random sub-model extraction (Federated Dropout), as we show in our theoretical statistical analysis in Section \ref{sec:approach} and Appendix \ref{app:statistical_analysis}, the global server model is trained more evenly by \texttt{FedRolex}
as the expected number of rounds for \texttt{FedRolex} going through all the parameters of the global model for at least certain times is smaller than that of Federated Dropout.
%, making the global model more  Federated Dropout less evenly trained 
%\Samiul{\texttt{FedRolex} is much less susceptible to client drift (see Table \ref{tab:overall_performance}) and in Section \ref{sec:approach} and Appendix \ref{app:statistical_analysis}, we show empirically that it has better safeguard against client drift and faster convergence}. {\color{red}The expected number of rounds for \texttt{FedRolex}  selecting all $I$ sub-models at least $m$  times is in the order of $mI$, which is smaller than that of Federated Dropout, $I\log(I)+I(m-1)\log\log I$.} 
%\MZ{@Ming: Can you add the conclusion of theoretical proof here, and explanation why we are better than Federated Dropout?}
%\Luyang{I suggest highlight our theory contribution in the abstract as well. Right now, the abstract looks like this is just an empirical paper.}
(4) \texttt{FedRolex} only needs to transmit the sub-model that is needed by a given client instead of the full server model to the client. This allows clients to contribute to federated training under resource constraints and reduces communication overheads (Appendix \ref{app:comm_cost}).
%send part of the model to the server instead of the full model, which savings communication overheads and fits the resource constraints on edge clients;
%to train effectively and takes more communication rounds.
(5) Lastly, \texttt{FedRolex} is fully compatible with existing secure aggregation protocols that enhance the privacy properties of FL systems.
%, and can be easily implemented on top of Be supported by FL primitives such as Federated Select . . 
%, and is able to train the 
%, making it converge to model-homogeneous setting.

% P5: Overview of our experimental design and results.
We evaluate the performance of \texttt{FedRolex} under two regimes: i) small-model small-dataset regime (most existing cross-device FL studies use this combination), and ii) large-model large-dataset regime (this combination echos recent efforts on pushing the frontier of cross-device FL towards training large server models on large-scale datasets \cite{ro2022scaling, zachary2022federatedselect, wang2020attack, yang2022partial}).
%Unlike majority of the existing studies that only focus on small-scale benchmarks and small server models in their evaluations, we also evaluate the performance of \texttt{FedRolex} on large-scale benchmark and large server model. 
%
%We compare the performance of our algorithm with that of HeteroFL on image classification tasks CIFAR10 and CIFAR100 \cite{krizhevsky2009learning} with ResNet18 model \cite{he2016deep}. 
%Due to the inherent randomness of creating non-iid (independent and identically distributed) partitions, we perform each experiment with $5$ different seeds and list the mean metric value. 
%
We highlight five of our findings:
%We compare \texttt{FedRolex} against both state-of-the-art PT-based and KD-based model-heterogeneous FL methods. 
(1) \texttt{FedRolex} consistently outperforms state-of-the-art PT-based model-heterogeneous FL methods under both small-model small-dataset and large-model large-dataset regimes (\S\ref{sec:heterogeneous}). 
%\texttt{FedRolex} also outperforms KD-based methods in low data heterogeneity scenario and on some task in high data heterogeneity scenario \textit{without public data};
%
(2) \texttt{FedRolex} reduces the gap between model-heterogeneous and model-homogeneous FL, especially under large-model large-dataset regime (\S\ref{sec:homogeneous}).
(3) With \texttt{FedRolex}, under both regimes, having a small fraction of large-capacity models could significantly boost the global model accuracy (\S\ref{sec:distributions}).
(4) \texttt{FedRolex} is able to train a global server model that is larger than the largest client model and outperforms Federated Dropout in terms of global model accuracy (\S\ref{sec:largermodel}).
(5) Using an emulated real-world device distribution, we show that \texttt{FedRolex} enhances the \textbf{inclusiveness} of FL and boosts the performance of low-end devices that would otherwise not benefit from FL (\S\ref{sec:inclusiveness}).
\vspace{-2mm}

\section{Related Work}
\label{sec.related}
\vspace{-2mm}
%\subsection{Model Heterogeneity}
%Our work focuses on model-heterogeneous FL in cross-device setting. Existing works can be generally categorized into knowledge distillation (KD)-based and partial training (PT)-based methods. 
%In model-heterogeneous setting, different edge devices are considered to have different compute capacities and hence the limit to the size of the models that can be deployed on them is different. 
%Adapting FL algorithms to model heterogeneous systems like these is a focal point of FL research now. 
%There have been several works that try to tackle this issue. We have found that these works are generally built on two primary concepts - Knowledge distillation\cite{hinton2015distilling} and Slimmable networks\cite{yu2018slimmable}.

\textbf{Knowledge Distillation (KD)-based Model-Heterogeneous FL.}
One primary approach for model-heterogeneous FL in cross-device settings is based on knowledge distillation (KD) \citep{hinton2015distilling}.
%The key idea of knowledge distillation \citep{hinton2015distilling, mirzadeh2020improved, liu2019improving} is to compress a large pre-trained model by teaching a smaller network, step by step, exactly what to do using this larger pre-trained network. This mechanism has been adopted in model-heterogeneous FL to train the server model from a federation of client models with different architectures. 
In particular, FedDF \cite{lin2020ensemble} distills knowledge from a set of classifiers trained with private data from a federation of client devices. The logit outputs of each classifier against an unlabeled public dataset are then used to train a student model at the server with KD. 
Similarly, DS-FL \citep{itahara2020distillation} utilized an unlabeled public dataset at the server and proposed a distillation-based semi-supervised FL approach to enhance performance by pseudo-labeling the public data.
%
%aggregates, and averages the logit outputs, and broadcasts the updated model back to the clients. Local models now train on the additional pseudo-labeled data, and performance is enhanced owing to the data augmentation effect.
%FedGEMS \cite{cheng2021fedgems} keeps a labeled public dataset at both the server and client side and transfers knowledge via the logit outputs. It also employs a selection scheme to guard against malicious attacks. 
%
FedGKT \cite{he2020group} proposed group knowledge transfer in which knowledge is transferred to a large model in the server from clients without public data.
Fed-ET \citep{cho2022heterogeneous} proposed a weighted consensus distillation scheme with diversity regularization that enables the training of a large server model with smaller client models. 
KD-based approaches, however, have several limitations: 
they often require public data to achieve competitive model accuracy. This is because model accuracy is dependent on the size of public data as well as the domain similarity of  public data with  client data \cite{lin2020ensemble, cho2022heterogeneous, stanton2021does}. Furthermore, as KD-based methods use client model weights partially or entirely as teachers to transfer knowledge to the server, they are incompatible with secure aggregation protocols, making them vulnerable to backdoor attacks \cite{wang2020attack}.
%they generally share neural network outputs which is a major privacy concern and make these algorithms vulnerable to backdoor attacks \cite{wang2020attack}.
%Another problem faced by KD-based approaches is the need for public data. Although approaches like FedGKT do not need public data, the vast majority of them require it. In fact, the accuracy of these approaches are often linked to the size and the domain similarity of the public data with the client data \cite{lin2020ensemble, cho2022heterogeneous}. However, FL systems are typically deployed in scenarios where data access is extremely restricted and the condition of having public data is impractical. 
%Lastly, KD-based approaches are typically restricted to classification tasks as they depend on neural network outputs for knowledge transfer. This unfortunately cannot be XXX.
%
%DOKD is also a computationally expensive task on the server side and so the costs blow up as the system scales up for larger models and cohort sizes. 
%The methods are also likely to face deployment issues, especially with the scale and complexity of the tasks. \citep{stanton2021does} shows knowledge distillation has a lot of dependencies on the network architecture, dataset, and task complexity as well as the domain of the distillation data. 

\textbf{Partial Training (PT)-based Model-Heterogeneous FL.}
%There have been several studies employing different methods to train models of different sizes. These generally follow PT. 
To address the limitations of KD-based approaches, partial training (PT) has emerged as another solution for model-heterogeneous FL.
Depending on how the sub-models are extracted from the global server model, existing PT-based methods can be in general categorized into two groups: \textit{random} sub-model extraction and \textit{static} sub-model extraction. 
Specifically, inspired by the dropout technique commonly used in centralized training~\citep{srivastava2014dropout}, Federated Dropout~\citep{caldas2018expanding} proposed to randomly extract sub-models from the global model. 
%
%In doing so, it can train on larger global models and use its entire capability, especially when the dataset and the number of clients are large. Here, a portion of the parameters in each layer of the global model is dropped, leaving a smaller submodel. Submodels are trained on the client devices according to their capacities and the updates are aggregated via weighted averaging similar to traditional dropout. 
%
Though easy to be integrated into existing FL frameworks, as reported in \citep{cheng2022does}, Federated Dropout becomes less effective when the data heterogeneity is high and the client cohort is small due to its randomness in selecting sub-models. 
In contrast, HeteroFL \citep{diao2020heterofl} and FjORD  \citep{horvath2021fjord} proposed static extraction schemes where sub-models are \textit{always} extracted from a \textit{designated} part of the global server model.
%and the weights are sent to the clients with corresponding capacity.
%uses this concept to aggregate models of different sizes.
%uses ordered dropout, which similarly samples submodels from a global model, trains them, and then does a weighted aggregation. 
%PruneFL \citep{jiang2022model} uses model pruning to extract submodels. 
%These algorithms are easily scalable and have minimum overheads on the server. 
%
However, such a static extraction strategy has two primary drawbacks. First, the global server model is restricted to the \textit{same} size as the largest client model. As such, the size and capability of the global model are implicitly restricted by the resources of client devices, making it not able to train large models due to resource bottlenecks at client devices. 
%Such restriction is restricted to the same capability as the largest client model. 
Second and more importantly, under static extraction, depending on their resource demands, different sub-models can \textit{only} be trained on clients whose on-device resources are matched. As a consequence, part of the global server model cannot be trained on data at low-end client devices, causing different parts of the global model to be trained on data with different distributions. This would degrade the performance of the global model, especially under high data heterogeneity.
%a negative consequence of such static sub-model extraction is that it restricts a certain part of the global model to receiving updates from a fixed pool of client data
%to  model are restricted to seeing updates from a fixed pool of client data. This limits the performance as the global knowledge is concentrated on a smaller portion of the model. 
%
%Servers are usually much more computationally capable and so should be able to use a larger model. 
%{\color{red} Because of the randomness in selecting sub-models, the frequencies in selecting the sub-models can not different, and it takes more rounds to go through all sub-models at least certain times.} %\MZ{@Ming: can you add another weakness of federated dropout from the theory side?}
%This is because of the high variance in heterogeneous data when only a small subset of the clients is selected.
%
In this work, we propose a rolling sub-model extraction scheme that tackles the issues of both random and static sub-model extraction methods.

%Recently, Federated Select \citep{zachary2022federatedselect} proposed a more generalized approach where clients select what part of the model to train. However, this method is significantly more complex.

%We summarize the different approaches used in Table \ref{tab:requirement_comparison_methods}
% \textbf{Model Heterogeneity in Federated Learning}
% - KD-based: 1) restricted to classification tasks since they use logits; 2) scalability: it needs to maintain the teacher models from the clients that are sampled; 3) incurs extra computation at the server due to KD, and this computation increases when scales up. 

% - Based on slimmable network concept: 1) the way they generate submodels hurts the convergence during training: different parts of the global model are trained different. 2) the largest model can not be smaller than the global server model.

\vspace{-2mm}
\section{Methodology}
\label{sec:approach}
\vspace{-2mm}

\subsection{Formulation of Model-Heterogeneous FL}
\vspace{-2mm}
%Most FL systems consider non-iid data distributions but assume that all models are homogeneous, i.e, they have the same architecture. This is not consistent with practical settings as edge devices have very different computing capacities. In this work, we consider the real-world scenario, 
Let $\mathcal{N}$ denote $N$ client devices with non-IID (non-identically and independently distributed) local data $D=\{D_1, D_2, ..., D_N\}$. %model-heterogeneous FL can be formulated as the following distributed optimization problem to train a global model of parameters $\theta$: 
Model-homogeneous FL trains a global model of parameter $\theta$ by solving the following optimization problem:
%is essentially a distributed optimization problem over the $N$ clients and can be characterized by:
\begin{align}
\min_{\theta}  
 F \left( \theta \right)  \triangleq \sum_{n=1}^N p_n F_n(\theta)  \label{eq:FL}
\end{align}
with 
\begin{align}
    F_n(\theta) \triangleq \frac{1}{m_n} \sum_{k=1}^{m_n} l(\theta; d_{n,k}),
\end{align}
where $D_n \triangleq \{d_{n, 1}, d_{n,2}, d_{n,3} ... d_{n, m_n}\}$ is the set of local data samples of client $n$
%$F(\theta) = \mathbf{E}_{\xi \sim D} l(\theta;\xi)$,  $F_n(\cdot)$ 
%\Luyang{Consider replace $\mathbf{E}_{\xi \sim D}$ to something like $\frac{1}{m_n} \sum_{k=1}^{m_n}$ to align this with what you did in equation~\ref{eqn:fed_obj}}
%is the local objective function at the $n^{th}$ client with  loss function $l(\cdot; \cdot)$, 
and $p_n$ is its corresponding weight such that $ p_n\geq 0$ and $\sum_{n=1}^Np_n = 1$.

% 
%\Luyang{Is $m$ the number of examples in the local dataset? Can we explain here?} 

In comparison, in \textit{model-heterogeneous} FL, clients train local models with heterogeneous capacities $\boldsymbol{{\beta}}=\{\beta_1, \beta_2, ..., \beta_N\}$, and 
the local objective function of the $n^{th}$ client becomes
\begin{align}
    F'_n(\theta_n) \triangleq \frac{1}{m_n} \sum_{k=1}^{m_n} l(\theta_n; d_{n,k}).
    \label{eqn:fed_obj}
\end{align}
Here, $\beta_n$ denote the model capacity of client $n$, and we define it as the proportion of nodes extracted from each layer in $\theta$ for client $n$. 
%Furthermore, let the set of unique capacities be $\boldsymbol{{\beta}}$. 
%the client's model capacity
The size of $\theta_n$ depends on $\beta_n$, and the parameter $\theta_n$ is obtained by selecting a sub-model from the global model $\theta$, which can change from one round to another. If $\theta_n$ changes, the objective function also changes. For simplicity, we use the same notation $l$ for the loss function for all clients and rounds, though they differ between clients and rounds. The key to model-heterogeneous FL is selecting $\theta_n$ from the global model $\theta$ given model capacity $\beta_n$.

\subsection{FedRolex: Model-Heterogeneous FL with Rolling Sub-Model Extraction}

\begin{wrapfigure}{r}{0.42\textwidth}   
 \centering
 \vspace{-5mm}
    \includegraphics[width=0.42\textwidth]{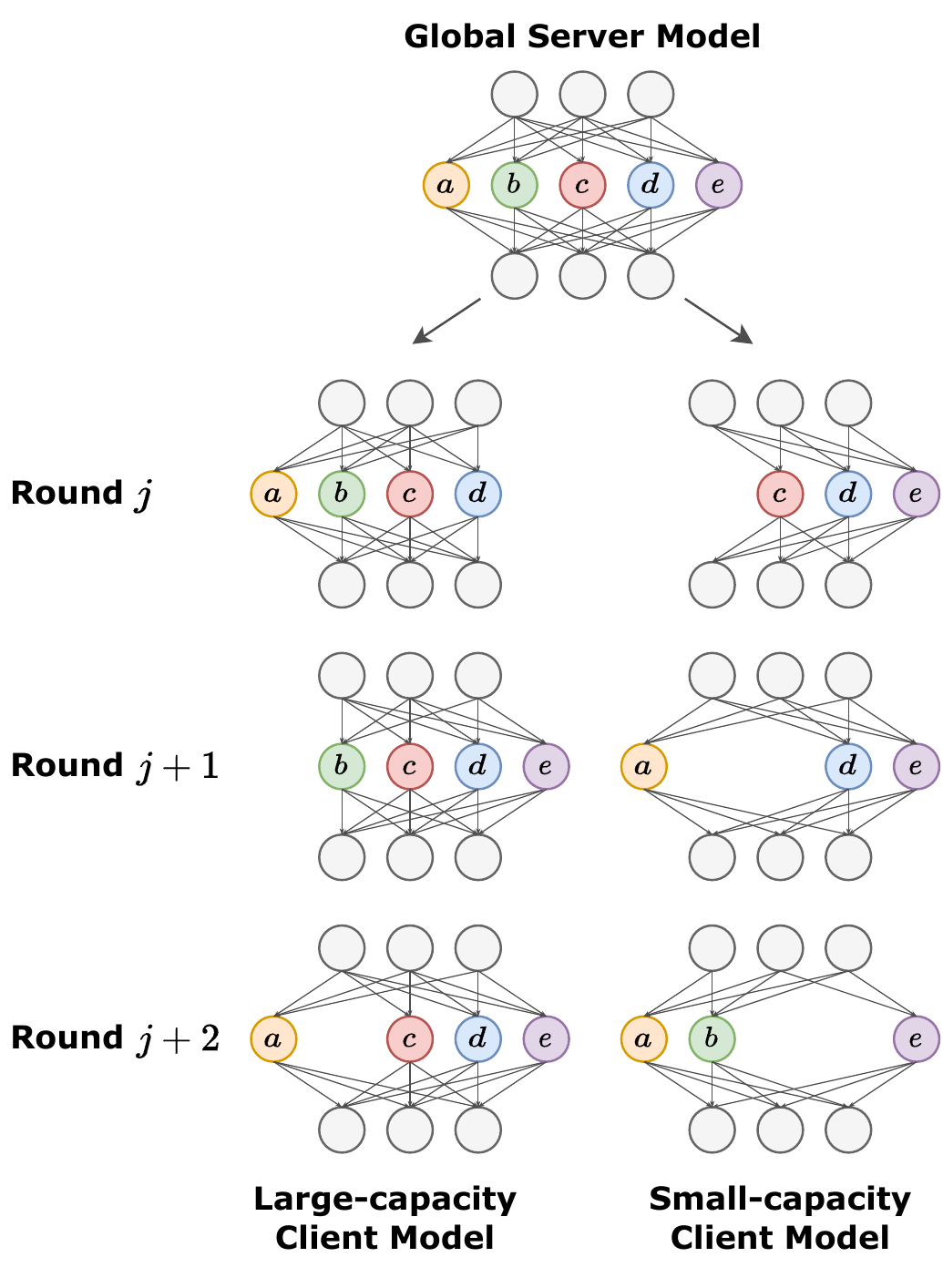}
    \caption{Overview of the rolling sub-model extraction scheme in \texttt{FedRolex}.}
    \label{fig:partial_training}
    \vspace{-5mm}
\end{wrapfigure}
As a partial training (PT)-based approach, at each  client, \texttt{FedRolex} trains only a sub-model extracted from the global server model and sends the corresponding sub-model updates back to the server for update aggregation. 
To help understand how \texttt{FedRolex} works, for simplicity, Figure \ref{fig:partial_training} illustrates three rounds of federated training of \texttt{FedRolex} on two participating heterogeneous clients, where one trains a large-capacity sub-model (left) and the other trains a small-capacity one (right). 
At the high level, at each round, the server extracts sub-models of different capacities from the global model and separately broadcasts them to the clients that have the corresponding capabilities. 
%the corresponding clients whose on-device resources satisfy the capacities of the sub-models.
The clients train the received sub-models on their local data and transmit their heterogeneous sub-model updates to the server. Lastly, the server aggregates those updates, and the result of the aggregation is used to update the global model for the next round.
The pseudocode of \texttt{FedRolex} is in Algorithm \ref{algo:fedrolex}.
%Here the global model weights $\theta^{(j)}$ encompasses neurons $\{a, b, c, d, e\}$. At each iteration, the server selects a sub-model encompassing neurons $\{a, b, c, d\}$ and $\{c, d, e\}$ from the global model $\theta^{(j)}$ and broadcasts the selected sub-model to each of the two participating clients. The clients train the received sub-models on their private data and post their updates $\theta_{m}^{(j)}$ to the server. The server then aggregates the variables from the updates and formulates the new global model $\theta^{(j+1)}$ for the next round. %\Luyang{We should either replace $\theta^{(0)}_{[i]}$ and $\theta^{(1)}_{[i]}$ to $\theta^{(j)}_{[i]}$ and $\theta^{(j+1)}_{[i]}$ (with $j$ denoting the training round) or mention this is at the beginning of training.} 
%The server repeats the process in the subsequent communication rounds. 

%%
The key to the design of \texttt{FedRolex} involves two design choices. In the following, we describe them in detail.
%There are two primary design decisions involved in building this partial training algorithm. (1) How to extract the sub-models from the global model? (2) How to aggregate client updates from heterogeneous devices to update the global model? 

%\begin{figure}[htbp]

\textbf{(1) What sub-models to be extracted for each client across different rounds?} 
At the server, \texttt{FedRolex} utilizes a rolling window to extract the sub-model from the global model. The rolling window advances in each round, and loops over all parts of the global model \textit{in sequence} across different rounds. This process iterates such that the global model is evenly trained until convergence.

Taking Figure \ref{fig:partial_training} as an example: in round $j$, the large-capacity and small-capacity client model extracted from the global model is $\{a, b, c, d\}$ and $\{c, d, e\}$, respectively. In round $j+1$, the rolling window advances one step\footnote{The step size is a hyperparameter of \texttt{FedRolex}. Please refer to Appendix \ref{app:overlap} for our ablation study on it.}, the large-capacity and small-capacity client model becomes $\{b, c, d, e\}$ and $\{d, e, a\}$, respectively. Similarly, in round $j+2$, the rolling window advances one step further, and the large-capacity and small-capacity client model becomes $\{c, d, e, a\}$ and $\{e, a, b\}$, respectively.
%
%a new cohort is selected and the extracted weights are rolled. So we get weights $\{b, c, d, e\}$ and $\{d, e, a\}$ which are broadcast to another large capacity model and another small capacity model respectively. %Intuitively, we can think of this as iterating over the model instead of the data.

%%
Such a rolling sub-model extraction scheme can be formalized as follows.
%Equation \eqref{eq:FL} shows us that we must find $\theta$ that minimizes the objective concerning all private datasets. In centralized settings, this is achieved by simply iterating over the entire data. In FL, however, we do not have this option. Instead, the parts of the global model are dynamically trained by iterating them over the entire client federation. 
%$\theta_n = \theta^{(j)}_{\mathcal{S^{(j)}}$
Let $\theta^{(j)}_n$ denote the parameters of the sub-model extracted from the global model for client $n$ in round $j$, $K_i$ denote the total number of nodes in layer $i$ of the global model, and $\mathcal{S}^{(j)}_{n, i}$ denote the node indices of layer $i$ of the global model that belongs to the extracted sub-model for client $n$ in round $j$. Then the layer $i$ of the sub-model extracted by the rolling sub-model extraction scheme for client $n$ in round $j$ is given by:
%Then $\theta^{(j)}_{n, [i, \cdot]}$ for the $i^{th}$ layer in extracted model parameters $\theta^{(j)}_n$ consist of $\{\theta^{(j)}_{n,[i, \mathcal{S}^{(j)}_{i, 0}]}, \theta^{(j)}_{n,[i, \mathcal{S}^{(j)}_{i, 1}]}, ... , \theta^{(j)}_{n,[i, \mathcal{S}^{(j)}_{i, K_i-1}]}\}$. 
%Intuitively, we can think of this as iterating over the model instead of the data.

%{\color{red}I think, we need to change $K_i$ to $K_i-1$ to keep the total number as $K_i$ because it starts at $0$.}

%\textbf{\rebuttal{Rolling Sub-Model Extraction.}}
%In \texttt{FedRolex}, the global model is rolled before the subset is extracted. This is shown in Figure \ref{fig:partial_training}. 

\vspace{-3mm}
\begin{align}
\label{eqn:subset_fedrolex}
% \theta^{(j)}_{n, i} &= \theta^{(j)}_{\mathcal{S}} \nonumber\\
   \mathcal{S}^{(j)}_{n,i} &= \left\{\begin{array}{ll}\{ \hat j,\hat j+1,\dots, \hat j+ \lfloor \beta_n K_i\rfloor-1\} &\mbox{ if } \hat j+\lfloor \beta_n K_i\rfloor \leq K_i,\\
    \{\hat j,\hat j+1,\dots,K_i-1\}\cup \{0,1,\dots, \hat j+\lfloor \beta_n K_i\rfloor-1-K_i\} & \mbox{ else.}
    \end{array}\right.
\end{align}
where $\hat j= j\bmod{K_i}$. 
%In other words, the selected node index starts at $\hat j$ and increase until $\hat j + \lfloor \beta_n K_i\rfloor$. When the node index reaches $K_i$, it returns to index 0. 
%For instance in Figure \ref{fig:partial_training}, for the small client model, the extracted sub-model consists of $\{c, d, e\}$ and is rolled in consecutive rounds to $\{d, e, a\}$ and $\{e, a, b\}$

\textbf{(2) How to aggregate heterogeneous sub-model updates to update the global model?} 
%In model homogeneous FL, $\theta_n$ equals $\theta$ as all devices have the same capacity. The updates are therefore straightforward to aggregate as it is a simple average of all the updates. However, in model-heterogeneous FL, this scheme will not work. 
%%
\texttt{FedRolex} employs a straightforward selective averaging scheme with no client weighting to aggregate heterogeneous sub-model updates sent from the clients to update the global model\footnote{In Appendix \ref{app:weighting_schemes}, we did an ablation study on three client weighting schemes and compared them with the non-weighting (selective averaging) scheme. We find that the performance of the three weighting schemes is not significantly better than the non-weighting scheme. Please refer to Appendix \ref{app:weighting_schemes} for details.}. Specifically, it computes the average of the updates for each parameter of the global model separately based on how many clients in a round updated that parameter. The parameter remains unchanged if no clients updated it. 
%Throughout the paper, unless otherwise stated, the weight of all clients is assumed to be the same, i.e, $p_m = 1/N$. 
%, which is the case when the global model is larger than the largest participating client model. 
%

%%
Taking Figure \ref{fig:partial_training} again as an example: in round $j$, the updates for $a$ and $b$ are obtained from the large-capacity model and the update for $e$ is from the small-capacity model only. In contrast, since $c$ and $d$ are part of both models, the update is computed by taking the average from both models.

\begin{minipage}[t]{0.99\textwidth}
\begin{algorithm}[H]
\caption{\textbf{FedRolex}}
\label{algo:fedrolex}
\begin{multicols}{2}
    \SetKwFunction{proc}{clientStep}
    \SetKwInOut{Input}{Input}
    \SetKwInOut{Output}{Output}
    % \underline{function Euclid} $(a,b)$\;
    Initialization ; $\theta^{(0)}$, $\mathcal{N}$\\
    \Input{$D_n\; \beta_n\; \forall n \in \mathcal{N}$, }
    \Output{$\theta^{J}$}
    \underline{Server Executes}\\    
    \For{$j\gets0$ \KwTo $J-1$}{
     Sample subset $\mathcal{M}$ from $\mathcal{N}$\\
     Broadcast $\theta^{(j)}_{m,\mathcal{S}^{(j)}_{m, i}}$ to client $m\in \mathcal{M}$\\$ \qquad \forall i, \mathcal{S}^{(j)}_{m,i} \;\text{from Equation}\;\eqref{eqn:subset_fedrolex}$\\
     \For{\textbf{each} client $m \in \mathcal{M}$}{
         \proc{$\theta^{(j)}_m$, $D_m$}
     }
     % Aggregate $\theta^{(j+1)}_{[i, k]}$ = $\frac{1}{\sum_{\substack{m\in\mathcal{M}}} 1} \sum_{\substack{m\in\mathcal{M}}} \theta_{m, [i, k]}$ \\
      Aggregate $\theta^{(j+1)}_{[i, k]}$ according to Equation~\eqref{eqn:theta_update}
    }
    \columnbreak
     \SetKwProg{step}{Subroutine}{}{}
     \step{\proc{$\theta^{(j)}_n$, $D_n$}}{
     $m_n \longleftarrow len(D_n)$\\
     \For{$k\gets0$ \KwTo $m_n$}{
        $\theta_n \longleftarrow \theta_n - \eta \nabla l(\theta_n; d_{n,k})$
        }
        return $\theta_n$
     }
     \end{multicols}
\end{algorithm}
\end{minipage}
%\vspace{-2mm}

\begin{figure}[tbp]
    \centering
    \includegraphics[width=0.91\textwidth]{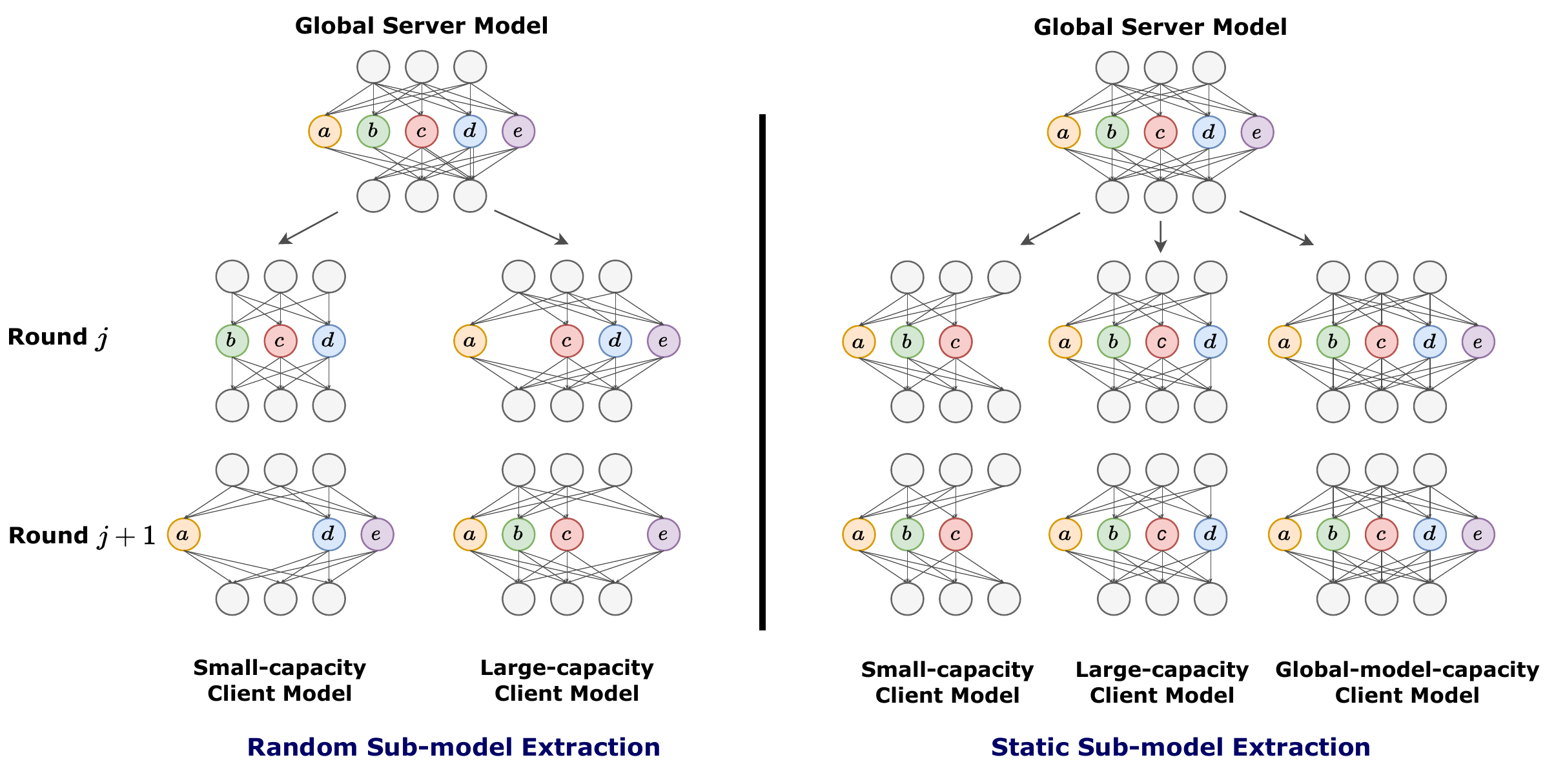}
    \vspace{-3mm}
    \caption{Illustration of how sub-models are extracted by random sub-model extraction scheme (Left) and static sub-model extraction scheme (Right) over two rounds.}
    %{\color{red}This is not clear, we nee to make the figure caption clear for the reader without looking at the main text.}
    \label{fig:submodel_extraction}
\end{figure}

\subsection{Comparison with Random and Static Sub-model Extraction Schemes}
%%\texttt{FedRolex} uses a new way to extract the sub-models. 

%To help understand the differences between and 

Existing sub-model extraction schemes can be grouped as random-based (Federated Dropout) and static-based (HeteroFL, FjORD) methods. In this section, we describe the differences between them and the proposed rolling-based scheme employed in \texttt{FedRolex}. 
%
%For different PT-based methods, how $\mathcal{S}$ is chosen varies according to their design which we show in \S\ref{sec:comparison}. The aggregation method is typically similar to these methods.
%For \texttt{FedRolex}, a rolling process for computing $\mathcal{S}$ and thereby extracting the sub-models is used. 
%For simplicity, we focus on layer $i$ at client $n$ during communication round $j$. 
%We use $K_i$ to denote the number of nodes in the $i^{th}$ layer of the global model and $\beta_n$ denote the proportion of kernels extracted. 
For comparison purpose, the pseudocodes of both Federated Dropout and HeteroFL are included in Appendix \ref{app:algorithm_pseudocode}.

%We have a dependency on rounds, whereas FD and HeteroFL do not.

%Sub-models are generally created by pulling a subset of weights from the global model. This subset can be dynamic (Federated Dropout, \texttt{FedRolex}) in each round or remain static for all rounds (HeteroFL). \texttt{FedRolex} uses a new way to extract the sub-models.  %In this section, we compare \texttt{FedRolex} with two existing methods
%We discuss three ways to extract submodels: static sub-model extraction (HeteroFL), random sub-model extraction (Federated Dropout) and rolling sub-model extraction (\texttt{FedRolex}). 

\subsubsection{Comparison with Random Sub-Model Extraction Scheme}
In random sub-model extraction scheme, in each round, the sub-models are extracted from the global model in a random manner. 
%This is shown in the right column of .
%Instead of the kernels in the convolution layer of the global model being rolled, we can say that they are randomly rearranged on each round, and the first $\lfloor K_i*\left(1-r\right)\rfloor$ kernels are extracted similar to what happens in \texttt{FedRolex}. 
As such, the layer $i$ of the sub-model extracted by the random sub-model extraction scheme for client $n$ in round $j$ is given by:
\begin{align}
    \mathcal{S}^{(j)}_{n,i} = \{k_c \;\vert\; \mbox{integer } k_c \in [0, K_i-1] ~\text{for}~ 1\leq c \leq \lfloor \beta_n K_i\rfloor  \},
\end{align}
where a total number of $\lfloor \beta_n K_i\rfloor$ nodes are randomly chosen from the global model.
%Unlike the static sub-model extraction, all nodes see the entire data given a sufficiently large number of communication rounds.

\textbf{Discussion:} 
%Dropout-based algorithms like Federated Dropout~\cite{caldas2018expanding} use random dropout to create smaller sub-models. %However, this becomes unstable when data is very heterogeneous and the number of clients is small. %We further show in Appendix \ref{app:statistical_analysis} that a rolling schedule is much better statistically than a random one.
As shown in Figure \ref{fig:submodel_extraction}(left), similar to the proposed rolling-based scheme, the sub-models extracted across different rounds by the random-based scheme have different architectures.
%but are randomly selected. 
%
However, due to its randomness in selecting sub-models in each round, the global model is trained less evenly, making it more vulnerable to client drift.
%
%We show this in the statistical analysis we conducted in Appendix \ref{app:statistical_analysis}.
%
In short, although the expected value of the frequency for updating each index is the same for all the indices, their exact frequencies are not the same due to randomness. 
%In other words, given $I$ components of the global model, if one component is chosen at each round, the expected number of rounds to go through all components at least once is ${\prod_{i=1}^I{i\over I}}$, which is close to $I\ln(I)$; see Appendix \ref{app:statistical_analysis} for more details and the results on updating all components at least more than one time. 
Consequently, the random-based scheme cannot balance the update frequencies of different parts of the global model, and it inevitably takes more rounds to update the whole global model.
Moreover, as we show in Appendix \ref{app:statistical_analysis}, the expected number of rounds for Federated Dropout   selecting all $I$ sub-models at least $m$  times is in the order of $I\log(I)+I(m-1)\log\log I$, which is larger than that of \texttt{FedRolex}, $mI$. 
%In contrast, as we show in Appendix \ref{app:statistical_analysis}, the expected number of rounds for \texttt{FedRolex}  selecting all $I$ sub-models at least $m$  times is in the order of $mI$, which is smaller than that of Federated Dropout, $I\log(I)+I(m-1)\log\log I$. 

\subsubsection{Comparison with Static Sub-Model Extraction Scheme}\label{sec:comparison}
%HeteroFL (and other models like FjORD) employs a sub-model extraction scheme where
%
In static sub-model extraction scheme, in each round, the sub-models are \textit{always} extracted from a \textit{designated} part of the global model.
As such, the layer $i$ of the sub-model extracted by the static sub-model extraction scheme for client $n$ in round $j$ is given by:
\begin{align}
    \mathcal{S}^{(j)}_{n,i} = \{0,1,2,\dots, \lfloor \beta_n K_i\rfloor-1\}.
\end{align}
\vspace{-5mm}
%and $k_c$ represents the sub-index of kernel in global model.

%%
Note that $\mathcal{S}^{(j)}_{n,i}$ does \textit{not} depend on $j$. In other words, as shown in Figure \ref{fig:submodel_extraction}(right), the \textit{same} sub-model is extracted for each client in \textit{every} round. 
Moreover, the client model with smaller capacity and client model with larger capacity are \textit{not independent}. As shown in Figure \ref{fig:submodel_extraction}(right), the small-capacity model $\{a, b, c\}$ is a \textit{part} of the large-capacity model $\{a, b, c, d\}$, which again, is a \textit{part} of the global-capacity model $\{a, b, c, d, e\}$.
%and medium-capacity client models  $\{a, b, c, d\}$ and $\{a, b, c\}$ are always a \textit{part} of the large-capacity client model. 
%
These are the two key differences from both the random-based and the proposed rolling-based scheme.
%, and the extracted sub-models are always the same.

%This is shown in the left column of Figure \ref{fig:submodel_extraction}. 
%Unused nodes do not see the full data distribution. 

\textbf{Discussion:} 
%The static sub-model extraction scheme extracts the model parameters $\theta_n$ from designated locations in the global model: the client model size is based on the client capacity, and the nodes are selected starting from the first index. The extracted sub-models are then trained on selected devices from the client pool, and the updates for each parameter are aggregated and averaged across participating clients for that parameter. 
%
Given that, static-based scheme, however, has two primary drawbacks.
%
%In contrast, for a \textbf{Random} extraction scheme like {Federated Dropout},  
First, to cover the whole global model, there must be clients to train the full-size global model $\{a, b, c, d, e\}$. As such, the global model is restricted to the \textit{same} size as the largest client model. 
Second, as shown in Figure \ref{fig:submodel_extraction}(right), while $a$, $b$ and $c$ will be trained on data on all three types of clients, $d$ will not be trained on data on small-capacity clients, and $e$ will only be trained on data on global-model-capacity clients. As a consequence, different parts of the global model are trained on data with different distributions, which inevitably degrades the global model training quality. 

\vspace{-2mm}

\section{Experiments}
\label{sec.experiments}

\vspace{-1mm}
%\subsection{Experimental Setup}

\textbf{Datasets and Models.}
%In our study, we use two datasets to showcase our results. 
We evaluate the performance of \texttt{FedRolex} under two regimes. Under small-model small-dataset regime, we train pre-activated ResNet18 (PreResNet18) models \citep{he2016deep} on CIFAR-10 and CIFAR-100 \citep{krizhevsky2009learning}. We replace the batch Normalization in PreResNet18 with static batch normalization~\citep{diao2020heterofl, andreux2020siloed} and add a scalar module after each convolution layer~\citep{diao2020heterofl}.
Under large-model large-dataset regime, we use Stack Overflow \citep{authors2019tensorflow} and followed ~\citep{wang2021field} to train a modified 3-layer Transformer \cite{vaswani2017attention} with a vocabulary of $10,000$ words, where the dimension of token embeddings is $128$, and the hidden dimension of the feed-forward network (FFN) block is $2048$. We use ReLU activation and use $8$ heads for the multi-head attention where each head is based on 12-dimensional (query, key, value) vectors.
The statistics of the datasets are listed in Table \ref{tab:data_stats}.
%two small-scale datasets CIFAR-10, CIFAR-100 \citep{krizhevsky2009learning} and one large-scale dataset Stack Overflow \citep{authors2019tensorflow}. The statistics of the datasets are listed in Table \ref{tab:data_stats}.
%For CIFAR-10 and CIFAR-100, we use pre-activated ResNet18 (PreResNet18) models \citep{he2016deep}. We replace the batch Normalization in PreResNet18 with static batch normalization~\citep{diao2020heterofl, andreux2020siloed} and add a scalar module after each convolution layer~\citep{diao2020heterofl}.
%For Stack Overflow, we followed ~\citep{wang2021field} and trained a modified 3-layer Transformer \cite{vaswani2017attention} with a vocabulary of $10,000$ words, where the dimension of token embeddings is $128$, and the hidden dimension of the feed-forward network (FFN) block is $2048$. We use ReLU activation and use $8$ heads for the multi-head attention where each head is based on 12-dimensional (query, key, value) vectors. 
%and set the dropout rate to 0.1.

%
%We validate the performance of \texttt{FedRolex} on \rebuttal{three} datasets -- CIFAR-10, CIFAR-100, and \rebuttal{Stack Overflow}. %
\vspace{-0mm}
\begin{table}[h]
\centering
\caption{Dataset statistics.}
\label{tab:data_stats}
\resizebox{\textwidth}{!}{%
\begin{tabular}{ccccccc} 
\toprule
\textbf{Dataset} & \textbf{Train Clients} & \textbf{Train Examples} & \textbf{Validation Clients} & \textbf{Validation Examples} & \textbf{Test Clients} & \textbf{Test Examples} \\ 
%\hline 
CIFAR-10 & \num{100} & 50,000 & N/A & N/A & N/A & 10,000 \\
CIFAR-100 & \num{100} & 50,000 & N/A & N/A & N/A & 10,000 \\
Stack Overflow & 342,477 & 135,818,730 & 38,758 & 16,491,230 & 204,088 & 16,586,035 \\
\bottomrule
\end{tabular}
}
\vspace{-0mm}
\end{table}

%We further apply our algorithm to a masked language modeling task using the WikiText2 dataset \cite{lecun1998gradient, krizhevsky2009learning, merity2016pointer}. The masking rate for the task is set at $15\%$ and balanced data examples were uniformly assigned to each client. Each client roughly had 3000 different words in their local dataset, while the total vocabulary size is $33278$.

%\textbf{Models.}

\textbf{Data Heterogeneity.}
For CIFAR-10 and CIFAR-100, we followed HeteroFL \citep{diao2020heterofl} to model non-IID distributions by restricting each client to have $L$ labels. 
In our evaluation, we consider two levels of data heterogeneity. For CIFAR-10, we define $L=2$ as high data heterogeneity and $L=5$ as low data heterogeneity. For CIFAR-100, we use $L=20$ as high data heterogeneity and $L=50$ as low data heterogeneity. These two levels roughly correspond to Dirichlet distribution $Dir_K(\alpha)$ with $\alpha$ equal to $0.1$ and $0.5$, respectively.
%Based on the value of $L$, we randomly partitioned CIFAR-10 and CIFAR-100 into $100$ partitions and assigned them to the client pool. 
%datasets are not partitioned implicitly and so to simulate data heterogeneity and better reflect the non-iid nature of the data, we use balanced non-iid data distribution following HeteroFL. We skew the data distribution b
%
For Stack Overflow, the dataset is partitioned over user IDs, making the dataset naturally non-IID distributed. 
%The total number of clients, in this case, was $342,477$.

\textbf{Model Heterogeneity.}
Without loss of generality, in our evaluation, we consider five different client model capacities $\boldsymbol{\beta}=\{1, \nicefrac{1}{2}, \nicefrac{1}{4}, \nicefrac{1}{8}, \nicefrac{1}{16}\}$ where for instance, $\nicefrac{1}{2}$ means the client model capacity is half of the largest client model capacity (full model).
%where $\beta_n$  would take any value in
%\texttt{FedRolex} is designed to work with a range of model sizes.  
%To simplify our experiments and ablation studies, 
%In our work, we use five model sizes, 
%
To generate these client models, for ResNet18, we vary the number of kernels in convolution layers and keep the nodes in the output layers the same.
%in both our method and the PT-based baselines to keep the comparison fair.
%
For Transformer, we vary the number of nodes in the hidden layer of the attention heads.

\textbf{Baselines.}
We compare \texttt{FedRolex} against both state-of-the-art PT-based model-heterogeneous FL methods including Federated Dropout~\citep{caldas2018expanding} and HeteroFL~\citep{diao2020heterofl}\footnote{We did not compare with FjORD because its code is not open-source and we could not reproduce their results following the paper.} as well as state-of-the-art KD-based model-heterogeneous FL methods including FedDF~\citep{lin2020ensemble}, DS-FL \cite{itahara2020distillation} and Fed-ET~\citep{cho2022heterogeneous} \footnote{We did not compare with FedGKT~\cite{he2020group} as it is only compatible with CNN models.}.
%, FedGEMS \cite{cheng2021fedgems}, FedGKT \cite{he2020group}
%We compare \texttt{FedRolex} against state-of-the-art PT-based methods HeteroFL and Federated Dropout and state-of-the-art KD-based methods FedDF, DS-FL, and Fed-ET obtained from \citet{lin2020ensemble}. 
%However, we would like to point out that the experimental settings of the KD-based methods are not exactly the same as ours but they are similar. %
To ensure a fair comparison, all the PT-based baselines are trained using the same learning rate, number of communication rounds, and  multi-step learning rate decay schedule. The details of the schedule for each dataset and experiment are described in Appendix \ref{app:experimental_setup}.

\textbf{Configurations and Platform.}
For CIFAR-10 and CIFAR-100, we apply bounding box crop \citep{zoph2020learning} to augment the images. In each communication round, $10\%$ of the clients are randomly selected from a pool of $100$ clients. 
For Stack Overflow, we followed \cite{wang2021field} to use a $10\%$ dropout rate to prevent over-fitting, and $200$ clients are randomly selected from a pool of $342,477$ clients in each communication round. The details of the hyper-parameters for model training are included in Appendix~\ref{app:experimental_setup}. We implemented \texttt{FedRolex} and PT-based baselines using PyTorch \cite{paszke2015pytorch} and Ray \cite{moritz2018ray}, and conducted our experiments on $8$ NVIDIA A$6000$ GPUs.
%using the PyTorch library for machine learning and the Ray library for simulating the distributed communication.

\textbf{Evaluation Metrics.}
We use global and local model accuracy as our evaluation metrics. Specifically, global model accuracy is defined as the server model accuracy on the test set; and local model accuracy is defined as the accuracy of the server model on each of the client's local datasets. 
For CIFAR-10 and CIFAR-100, we report the classification accuracy. For Stack Overflow, we report the next word prediction accuracy which includes both out-of-vocabulary (OOV) and end-of-sentence (EOS) tokens. 
%Since there is some randomness due to the non-iid partitioning, 
We run our experiments using five different seeds for CIFAR-10 and CIFAR-100 and using three different seeds for Stack Overflow.
%, and report the mean and standard deviation. 
%Note that the results of KD-based methods were obtained from \cite{cho2022heterogeneous}.
%using three different seeds. 

%{\color{red}change all Ours to FedRolex.}
%%
\subsection{Performance Comparison with State-of-the-Art Model-Heterogeneous FL Methods}
\label{sec:heterogeneous}

First, we compare the performance of \texttt{FedRolex} with state-of-the-art PT and KD-based model-heterogeneous FL methods.
For a fair comparison, we followed the  experimental settings used in prior arts where the distributions of client model capacities are uniform and the global server model is the same as the largest client model.
%existing model-heterogeneous FL studies, we performed our experiment where the distributions of model capacities are uniform for a fair comparison. 
%we the client capacity distribution is considered uniform.

%
\textbf{Evaluation Results:} Table \ref{tab:overall_performance} summarizes our results. We have two observations.
(1) In comparison with state-of-the-art PT-based methods, under the small-model small-dataset regime, \texttt{FedRolex} consistently outperforms HeteroFL and Federated Dropout under both low and more challenging high data heterogeneity scenarios. In particular, under high data heterogeneity, Federated Dropout which extracts sub-model randomly has worse performance than \texttt{FedRolex} and HeteroFL which both extract sub-models in a deterministic manner.
Under large-model large-dataset regime, \texttt{FedRolex} also outperforms both HeteroFL and Federated Dropout. These results together demonstrate the superiority of \texttt{FedRolex} under both regimes. 
%However, the performance of Federated Dropout is significantly improved under low data heterogeneity.
%The scenario changes when data heterogeneity is low. 
%
(2) In comparison with state-of-the-art KD-based methods, \texttt{FedRolex} only performs worse than Fed-ET and FedDF on CIFAR-10 under high data heterogeneity, but outperforms all the KD-based methods on the more challenging CIFAR-100 which has a larger number classes than CIFAR-10 under both low and high data heterogeneity scenarios. It is important to note that KD-based methods leverage public data to boost their model accuracy while \texttt{FedRolex} does not. 
%
%both \texttt{FedRolex} and Federated Dropout do better than HeteroFL for the CIFAR-10 dataset but Federated Dropout still lags behind HeteroFL in CIFAR-100. 

\begin{table}[t]
\centering
\caption{Global model accuracy comparison between \texttt{FedRolex}, PT and KD-based model-heterogeneous FL methods, and model-homogeneous FL methods.
%The global model accuracy comparison on CIFAR-10 and CIFAR-100. For Stack Overflow, the average local accuracy of the test clients is shown. 
%
Note that the results of KD-based methods were obtained from \cite{cho2022heterogeneous}.
%using three different seeds. 
For Stack Overflow, since KD-based methods cannot be directly used for language modeling tasks, their results are marked as N/A.}
\label{tab:overall_performance}
\resizebox{1.0\textwidth}{!}{%
\begin{tabular}{@{}llccccccc@{}}
\toprule
 & \multirow{2}{*}{\textbf{Method}} &  & \multicolumn{2}{c}{\textbf{High Data Heterogeneity}} &  & \multicolumn{2}{c}{\textbf{Low Data Heterogeneity}} & \multirow{2}{*}{\textbf{Stack Overflow}} \\
\cmidrule(lr){4-8} &  &  & \textbf{CIFAR-10} & \textbf{CIFAR-100} &  & \textbf{CIFAR-10} & \textbf{CIFAR-100} &  \\ \midrule
\multirow{3}{*}{KD-based} & FedDF &  & 73.81 (\textpm\;0.42) & 31.87   (\textpm\;0.46) &  & 76.55 (\textpm\;0.32) & 37.87   (\textpm\;0.31) & \rebuttal{N/A} \\
 & DS-FL &  & 65.27 (\textpm\;0.53) & 29.12 (\textpm\;0.51) &  & 68.44 (\textpm\;0.47) & 33.56 (\textpm\;0.55) & \rebuttal{N/A} \\
 & Fed-ET &  & \textbf{78.66 (\textpm\;0.31)} & \textbf{35.78 (\textpm\;0.45)} &  & \textbf{81.13 (\textpm\;0.28)} & \textbf{41.58 (\textpm\;0.36)} & \rebuttal{N/A} \\ \midrule
\multirow{3}{*}{PT-based} & HeteroFL &  & 63.90 (\textpm\;2.74) & 52.38 (\textpm\;0.80) &  & 73.19 (\textpm\;1.71) & 57.44 (\textpm\;0.42) & \rebuttal{27.21 (\textpm\;0.22)} \\
 & Federated Dropout &  & 46.64 (\textpm\;3.05) & 45.07 (\textpm\;0.07) &  & 76.20 (\textpm\;2.53) & 46.40 (\textpm\;0.21) & \rebuttal{23.46 (\textpm\;0.12)} \\
 & \texttt{FedRolex} &  & \textbf{69.44 (\textpm\;1.50)} & \textbf{56.57 (\textpm\;0.15)} &  & \textbf{84.45 (\textpm\;0.36)} & \textbf{58.73 (\textpm\;0.33)} & \rebuttal{\textbf{29.22 (\textpm\;0.24)}} \\ \midrule
 & Homogeneous (smallest) &  & 38.82 (\textpm\;0.88) & 12.69 (\textpm\;0.50) &  & 46.86 (\textpm\;0.54) & 19.70 (\textpm\;0.34) & \rebuttal{27.32 (\textpm\;0.12)} \\
 & Homogeneous (largest) &  & \textbf{75.74 (\textpm\;0.42)} & \textbf{60.89 (\textpm\;0.60)} &  & \textbf{84.48 (\textpm\;0.58)} & \textbf{62.51 (\textpm\;0.20)} & \rebuttal{\textbf{29.79 (\textpm\;0.32)}} \\ \bottomrule
\end{tabular}%
}
\end{table}

\subsection{Performance Comparison with Model-Homogeneous FL Methods}
\label{sec:homogeneous}

We also compare the global model accuracy of \texttt{FedRolex} with two model-homogeneous cases where all the clients have the largest capacity model ($\boldsymbol{\beta}=\{1\}$) and the smallest capacity model ($\boldsymbol{\beta}=\{\nicefrac{1}{16}\}$), representing the upper and lower-bound performance, respectively.

\textbf{Evaluation Results:} As listed in Table \ref{tab:overall_performance}, compared with other PT-based methods, \texttt{FedRolex} reduces the gap in global model accuracy between model-heterogeneous and upper-bound model-homogeneous settings. In particular, \texttt{FedRolex} is on par with the upper-bound model-homogeneous case for Stack Overflow, whereas both HeteroFL and Federated Dropout perform even worse than the model homogeneous case using the smallest model.   
This result indicates that with \texttt{FedRolex}, we will not be constrained to only using high-end devices to achieve competitive global model accuracy.
Note that Fed-ET achieves a higher global model accuracy than the model-homogeneous upper bound on CIFAR-10 under high data heterogeneity, which showcases the advantage of using public data.

\subsection{Impact of Client Model Heterogeneity Distribution}
\label{sec:distributions}

In our previous experiments, the distributions of model capacities across client devices are set to be uniform. 
%
%in all experiments except when showcasing the real world performance in Table \ref{tab:real_world_results}. The distribution for that experiment is given in Appendix \ref{app:experimental_setup}.
%In practical scenarios, however, the distributions of model capacities will likely be skewed. 
In this experiment, we aim to understand the impact of the client model heterogeneity distribution.
%how global model accuracy changes when the proportion of higher capacity clients varies. 
%
To do so, without loss of generality, we use two client model capacities $\boldsymbol\beta=\{1, \nicefrac{1}{16}\}$ and vary the distribution ratio between the two (denoted as $\rho$) where $\rho = 1$ represents the case in which all the clients have the largest capacity model ($\boldsymbol{\beta}=\{1\}$) and $\rho = 0$ represents the case in which all the clients have the smallest capacity model ($\boldsymbol{\beta}=\{\nicefrac{1}{16}\}$).
%the models of all the clients have the architecture of the full ResNet18 model, and $\rho = 0$ is the other edge case in which all clients have the smallest model. 
%This corresponds to $\boldsymbol\beta=\{1\}$. 

%
\textbf{Evaluation Results:} Figure \ref{fig:strongweaklearners} shows how global model accuracy changes when $\rho$ varies from $0$ to $1$ for CIFAR-10,  CIFAR-100 and Stack Overflow. 
We have three observations.
(1) For CIFAR-10 (Figure \ref{fig:strongweaklearners}(i)), there is a large gap in global model accuracy between high and low data heterogeneity for a wide range of $\rho$ (from 0.1 to 1). This is because CIFAR-10 is a relatively simple task and hence the global model accuracy is bottlenecked by the level of data heterogeneity instead of model capacity. This result indicates that having more high-capacity models in the cohort has only limited contribution to global model accuracy.
%The full ResNet18 model has enough parameters to fit the CIFAR-10 dataset and is limited by data heterogeneity. So here we see a large gap in accuracy between the two levels of data heterogeneity. 
%
(2) For the more challenging CIFAR-100 (Figure \ref{fig:strongweaklearners}(ii)), the gap in global model accuracy is much lower between high and low data heterogeneity. In contrast to CIFAR-10, the global model accuracy is bottlenecked by the highest capacity of the models rather than the level of data heterogeneity.
%The CIFAR-100 dataset is a more complex task and the full model has a lower fit. We observe here that the task itself is the bottleneck and not data heterogeneity. Also note that in CIFAR-10 for high data heterogeneity, the accuracy falls slightly when we reach $\boldsymbol\beta=\{1\}$ which is likely due to the regularization caused by freezing parameters. 
%(3) For Stack Overflow (Figure \ref{fig:strongweaklearners}(iii)), different from CIFAR-10 and CIFAR-100 that flatten out, the global model accuracy keeps increasing as the ratio of large-capacity clients increase. This demonstrates the superiority of using large models (e.g., Transformers) on large-scale datasets for FL.
%effectiveness of larger models on large data. As we increase the proportion of large models, they are trained better and learn more effective representations.
%
(3) For both regimes (Figure \ref{fig:strongweaklearners}(i)(ii) vs. Figure \ref{fig:strongweaklearners}(iii)), we observe that having a small fraction of large-capacity models significantly boosts the global model accuracy, but keeping increasing the ratio of large-capacity models has limited contribution to the accuracy.

\begin{figure}[t]
    \centering
\resizebox{\textwidth}{!}{%
\begin{tabular}{ccc}
\includegraphics[width=0.28\textwidth, trim={2.5cm 7cm 3.5cm 7.5cm},clip]{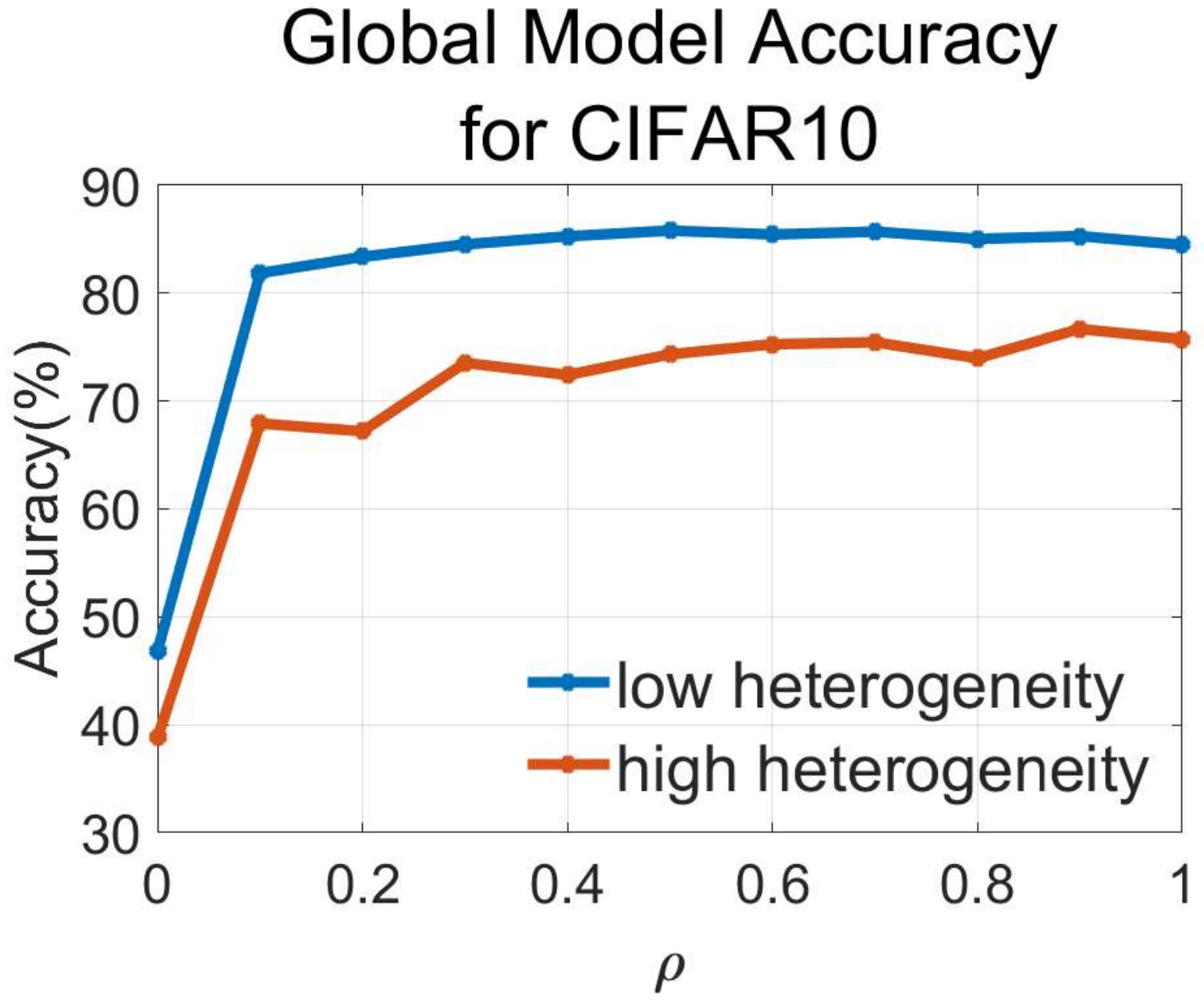} &
  \includegraphics[width=0.28\textwidth, trim={2.5cm 7cm 3.5cm 7.5cm},clip]{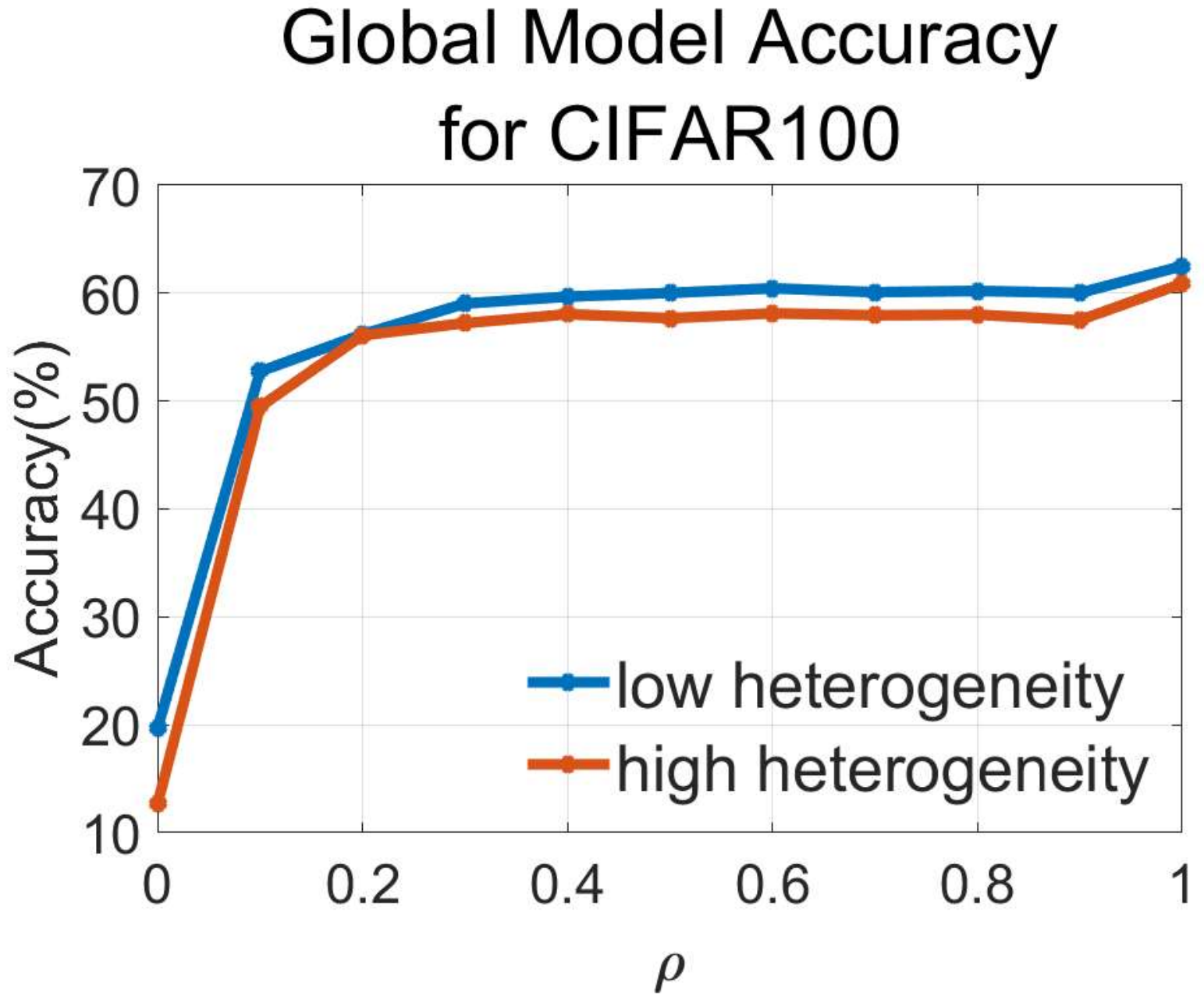} &
   \includegraphics[width=0.28\textwidth, trim={2.5cm 7cm 3.5cm 7.5cm},clip]{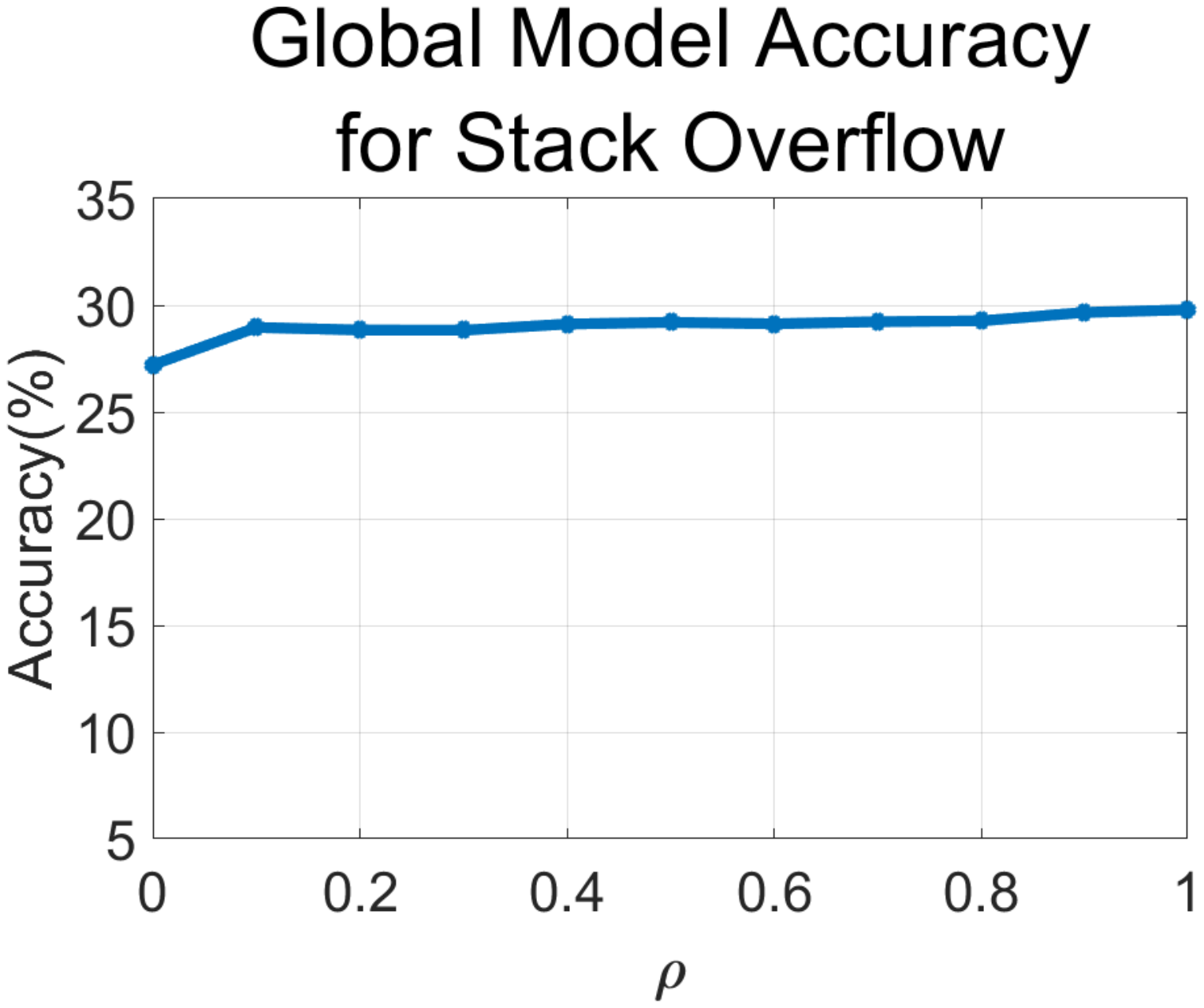} \\
\hspace{0.4cm}(i) &
  \hspace{0.4cm}(ii) & \hspace{0.4cm}(iii)
\end{tabular}%
}
    \caption{
    Impact of client model heterogeneity distribution on global model accuracy for (i) CIFAR-10, (ii) CIFAR-100, and (iii) Stack Overflow.}
    
    %The change of accuracy when the models are all the largest architecture vs when they are smallest. (i) shows observations for CIFAR-10 for low and high heterogeneity and (ii) is for CIFAR-100}
    \label{fig:strongweaklearners}
\end{figure}

\subsection{Performance on Training Larger Server Model}
\label{sec:largermodel}

Similar to Federated Dropout, one advantage of \texttt{FedRolex} over static sub-model extraction methods (HeteroFL and FjORD) is that \texttt{FedRolex} is able to train a global server model that is larger than the largest client model. In this experiment, we aim to evaluate the performance of \texttt{FedRolex} on training larger server models. To do so, we consider the case where the size of the global server model is $\boldsymbol\gamma = \{2, 4, 8, 16\}$ times the size of  client models. For simplicity, all client models have the same size.

%the server model size is restricted to the highest-capacity client model. In contrast, \texttt{FedRolex} is flexible to train a global model that can be larger than the largest client model. In this experiment, we evaluate the performance of \texttt{FedRolex} on training a server model that is larger than the largest client model. 
%This is more akin to real-world settings where the server has a much higher capacity.
%%

%
\textbf{Evaluation Results:} Figure \ref{fig:server_model_scaling}(i) and Figure \ref{fig:server_model_scaling}(ii) compare \texttt{FedRolex} with Federated Dropout in terms of global model accuracy when $\boldsymbol\gamma$ for CIFAR-10 and CIFAR-100, respectively.
%shows the global model accuracy of \texttt{FedRolex} and Federated dropout with $\boldsymbol\gamma = \{2, 4, 8, 16\}$ for CIFAR-10, CIFAR-100 and Stack Overflow.
%
As shown, although the global model accuracy drops for both \texttt{FedRolex} and Federated Dropout when $\gamma$ increases, especially from $1$ to $4$, \texttt{FedRolex} consistently achieves higher global model accuracy than Federated Dropout across $\boldsymbol\gamma = \{2, 4, 8, 16\}$ under both low and high data heterogeneity.
%stays relatively stable when $\gamma$ increases from $4$ to $16$. In contrast,  the accuracy of federated dropout continues to decline.
%
For Stack Overflow (Figure \ref{fig:server_model_scaling}(iii)), the global model accuracy has a much smaller drop  when $\gamma$ increases. This demonstrates the superiority of using large models on large-scale datasets for training larger server models.
%the accuracy keeps decreasing linearly past $\gamma=4$. However, the accuracy keeps falling as we go to smaller and smaller client models.
%These results indicate that there is a minimum client model capacity below which the larger server model cannot be trained effectively.

%We can see that accuracy for CIFAR-10 for $\gamma=16$ is better at $40.98\%$ compared to homogeneous at $38.82\%$ for highly heterogeneous data. The advantage is significantly greater with low heterogeneity at $59.48\%$ compared to homogeneous at $46.863\%$. We see similar improvements for CIFAR-100 with an accuracy of $20.5\%$ compared to $12.7\%$ for high heterogeneous data and $21.3\%$ compared to $19.7\%$ for low heterogeneous data. Note that the accuracy falls drastically if we make the client size much lower than half of the server. We see that in the CIFAR-10 dataset, the difference in performance for high and low data heterogeneity is consistent, which is not as significant in CIFAR-100. Here again, we see that CIFAR-100, being the more complex task, is bottlenecked by the model capacity, but in the case of CIFAR-10, the primary bottleneck is data heterogeneity. We also see that the accuracy increases slightly when smaller models are used. This occurs due to their regularization effect.

\begin{figure}[!htbp]
    \centering
\resizebox{\textwidth}{!}{%
\begin{tabular}{ccc}
\includegraphics[width=0.28\textwidth, trim={0cm 0cm 0cm 0cm},clip]{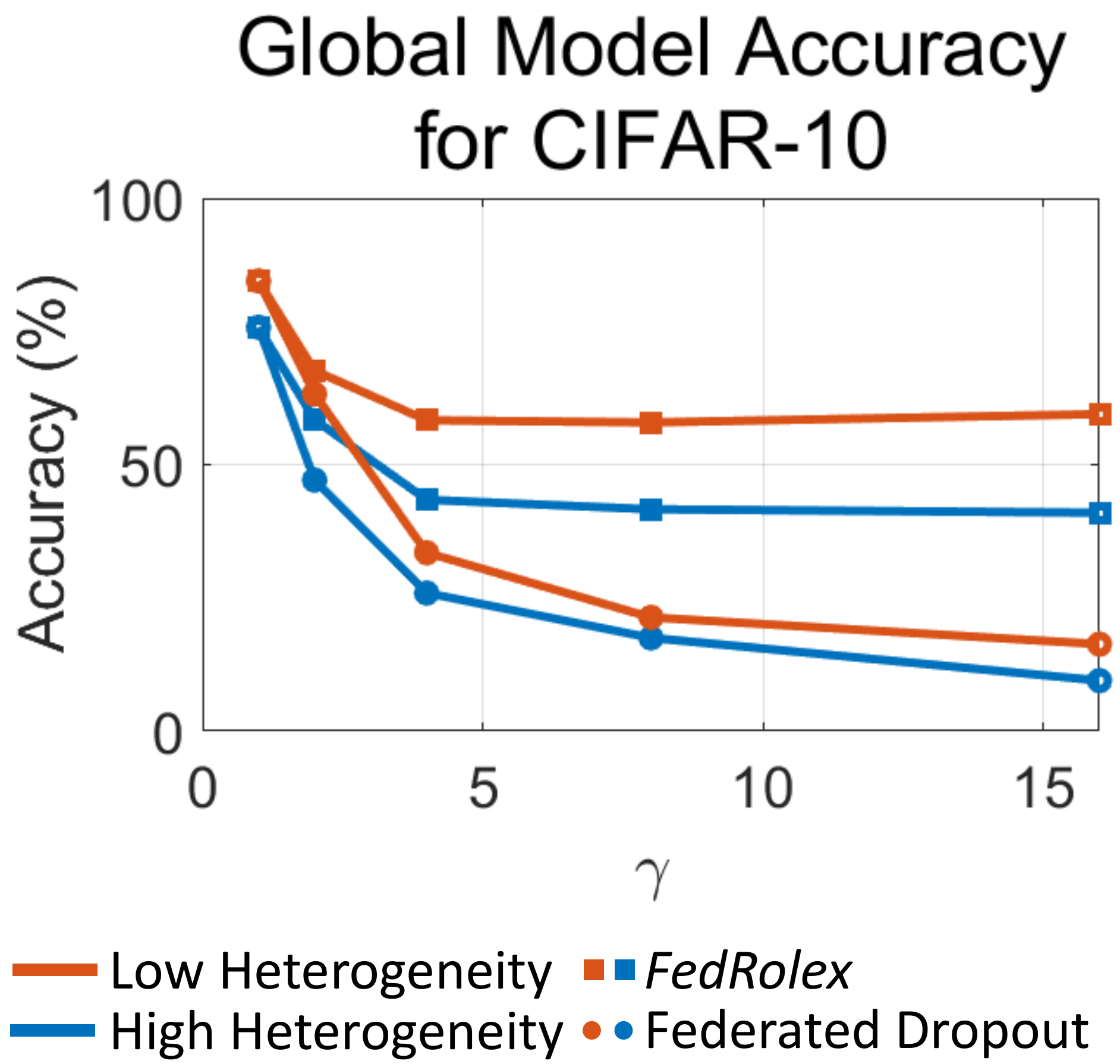} &
  \includegraphics[width=0.28\textwidth, trim={0.0cm 0cm 0cm 0cm},clip]{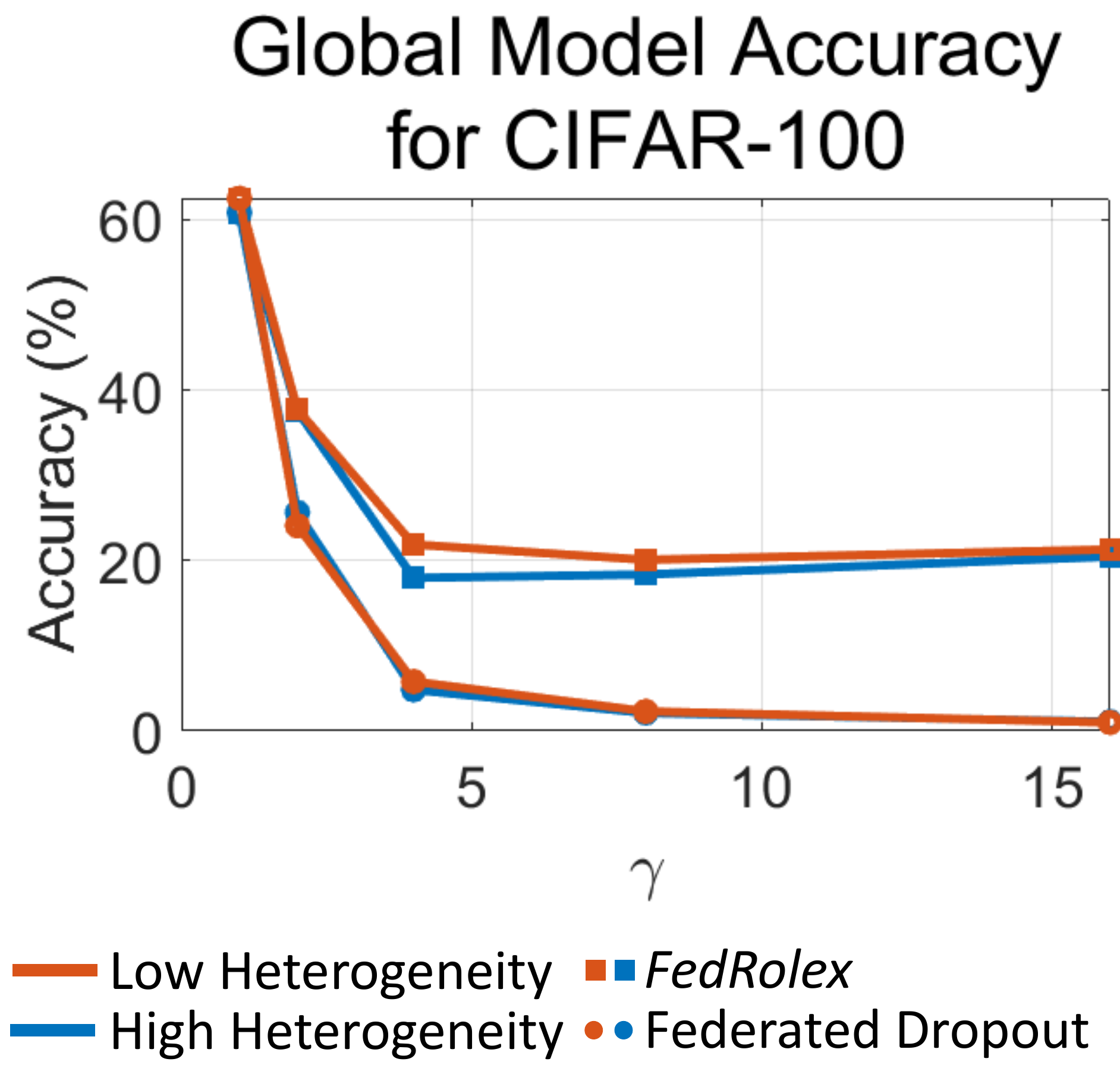} &
  \includegraphics[width=0.28\textwidth, trim={0cm 0cm 0cm 0cm},clip]{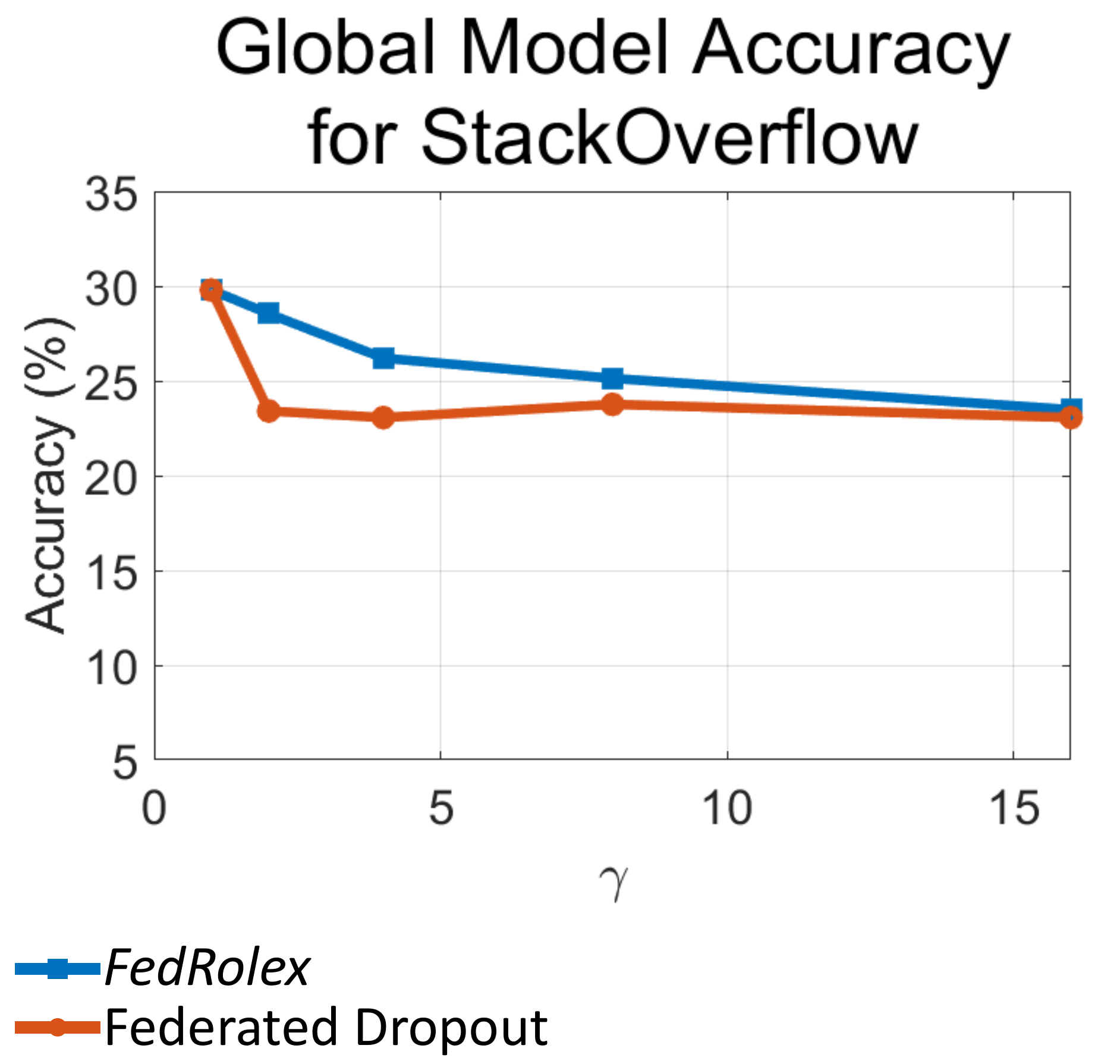}  \\
\hspace{0.4cm}(i) &
  \hspace{0.4cm}(ii) & \hspace{0.4cm}(iii)
\end{tabular}%
}
    \caption{
    Performance on training larger server model when the server model is $\gamma$ times the size of the client model for (i) CIFAR-10, (ii) CIFAR-100, and (iii) Stack Overflow.}
    \label{fig:server_model_scaling}
\end{figure}

\subsection{Enhance Inclusiveness of FL in Real-world Distribution}
\label{sec:inclusiveness}

A primary vision of \texttt{FedRolex} is to enhance the inclusiveness of FL. 
%If the highest model capacity is used for deployment, a lot of data from the low-end devices will be unused. The exclusion of low-end devices limits the usefulness of the global model for these low-end devices. For example, low-end devices may have more grainy, low-resolution photos which will not be used in training, and so, the server model cannot make accurate predictions on such images and hence limiting its usefulness for these devices. If we consider homogeneous FL where we want to include all the client data in training, the smallest model would need to be used. The lowest capacity client then limits the accuracy of the model. 
%%
%In this experiment, we show that \texttt{FedRolex} is a good solution that enhances the inclusiveness of federated learning.
%
To demonstrate this, in this experiment, we use real-world household income distribution to emulate real-world device distribution. Specifically, we retrieve household income distribution information from \citet{census2021}. We map $\beta_n=\nicefrac{1}{16}$ with the income group with earning less than \$$75,000$ and assign proportions of remaining groups in \$$25,000$ increments with increasing values of $\beta_n$. Detailed mapping of this distribution to the corresponding income distribution is provided in Figure \ref{fig:my_label} in Appendix \ref{app:experimental_setup}.

%instead of uniform model capacity distribution, we observe the performance of our algorithm with a real-world distribution of model capacities and hence judge its deployability. 
%Statistics like mobile processor market share or mobile phone distribution by the model are not openly available. So we assume the distribution of client device capacities roughly follows the income distribution of people in the USA.  

\begin{table}[t]
\centering
\caption{Performance of FedRolex under emulated real-world device distribution.}
%For Stack Overflow, the results are global model accuracy.
\label{tab:real_world_results}
\resizebox{1\textwidth}{!}{%
\begin{tabular}{@{}cccccccc@{}}
\toprule
                       \multirow{2}{*}{\textbf{Dataset}}   & \multirow{2}{*}{\textbf{Method}} &  & \multicolumn{2}{c}{\textbf{High Data Heterogeneity}} &  & \multicolumn{2}{c}{\textbf{Low Data Heterogeneity}} \\ \cmidrule(lr){4-5} \cmidrule(l){7-8} 
 &                      &  & \textbf{Local Accuracy}       & \textbf{Global Accuracy}      &  & \textbf{Local Accuracy}       & \textbf{Global Accuracy}      \\ \cmidrule(r){1-5} \cmidrule(l){7-8} 
\multirow{3}{*}{CIFAR-10}  & Homogeneous (smallest)   &  & 85.90 (\textpm\;0.46)     & 38.82 (\textpm\;0.88)    &  & 66.02 (\textpm\;0.52)   & 46.86 (\textpm\;0.54)  \\
        & Homogeneous (largest) &  & 95.54 (\textpm\;0.26)    & 75.74 (\textpm\;0.41)     &  & 93.54 (\textpm\;0.44)      & 84.48 (\textpm\;0.58)      \\
        & \texttt{FedRolex}             &  & {94.05 (\textpm\;1.01)} & {63.17 (\textpm\;1.45)} &  & {91.03 (\textpm\;0.36)} & {80.14 \textpm\;0.52)}  \\ \midrule
\multirow{3}{*}{CIFAR-100} & Homogeneous (smallest)   &  & 34.51 (\textpm\;0.56)    & 12.69 (\textpm\;0.50)     &  & 33.22 (\textpm\;0.10)     & 19.70 (\textpm\;0.34)    \\
        & Homogeneous (largest) &  & 81.99 (\textpm\;0.78)      & 60.89 (\textpm\;0.60)       &  & 76.43 (\textpm\;0.54)      & 62.51 (\textpm\;0.20)     \\
        & \texttt{FedRolex}             &  & {73.33 (\textpm\;0.96)} & {45.78 (\textpm\;1.71)} &  & {66.31 (\textpm\;0.34)} & {48.44 (\textpm\;0.51)} \\ \midrule
\multicolumn{1}{l}{\multirow{3}{*}{\rebuttal{Stack Overflow}}} & \rebuttal{Homogeneous (smallest)} & \multicolumn{1}{l}{} & \multicolumn{5}{c}{\rebuttal{27.32 (\textpm\;0.12)}} \\
\multicolumn{1}{l}{} & \rebuttal{Homogeneous (largest)} & \multicolumn{1}{l}{} & \multicolumn{5}{c}{\rebuttal{29.79 (\textpm\;0.32)}} \\
\multicolumn{1}{l}{} & \rebuttal{\texttt{FedRolex}} & \multicolumn{1}{l}{} & \multicolumn{5}{c}{{\rebuttal{29.55 (\textpm\;0.41)}}} \\
\bottomrule        
\end{tabular}%
}
\end{table}

\textbf{Evaluation Results:} 
Table \ref{tab:real_world_results} shows both the global and local model accuracies of \texttt{FedRolex} for CIFAR-10 and CIFAR-100 as well as the global model accuracy on Stack Overflow under the emulated real-world device distribution. Again, we compare with two model-homogeneous cases where all clients have the smallest and largest model capacities, representing lower and upper-bound accuracy, respectively.
%
%with two model-homogeneous cases where all the clients have the largest capacity model ($\boldsymbol{\beta}=\{1\}$) and the smallest capacity model ($\boldsymbol{\beta}=\{\nicefrac{1}{16}\}$), representing the upper and lower-bound performance, respectively.
%
We make two observations.
(1) Looking at the global model accuracy, \texttt{FedRolex} consistently outperforms the lower-bound model-homogeneous case across CIFAR-10, CIFAR-100, and Stack Overflow. This result indicates that \texttt{FedRolex} enhances the inclusiveness of FL and improves the accuracy of the global model, which would otherwise not be able to achieve.
(2) Looking at the local model accuracy,  \texttt{FedRolex} significantly outperforms the lower-bound model-homogeneous case on CIFAR-10 and CIFAR-100 under both low and high data heterogeneity. This result indicates that \texttt{FedRolex} effectively boosts the performance of low-end devices, which would otherwise not benefit from FL. A detailed illustration of how local model accuracy distribution of individual clients shifts when \texttt{FedRolex} is used compared to the smallest model-homogeneous case with the same client outreach is shown in Figure \ref{fig:real_world_dist}. 

\begin{figure}[!tbp]
    \centering
\resizebox{\textwidth}{!}{%
\begin{tabular}{cccc}
\includegraphics[width=0.5\textwidth, trim={0 2mm 0 0}]{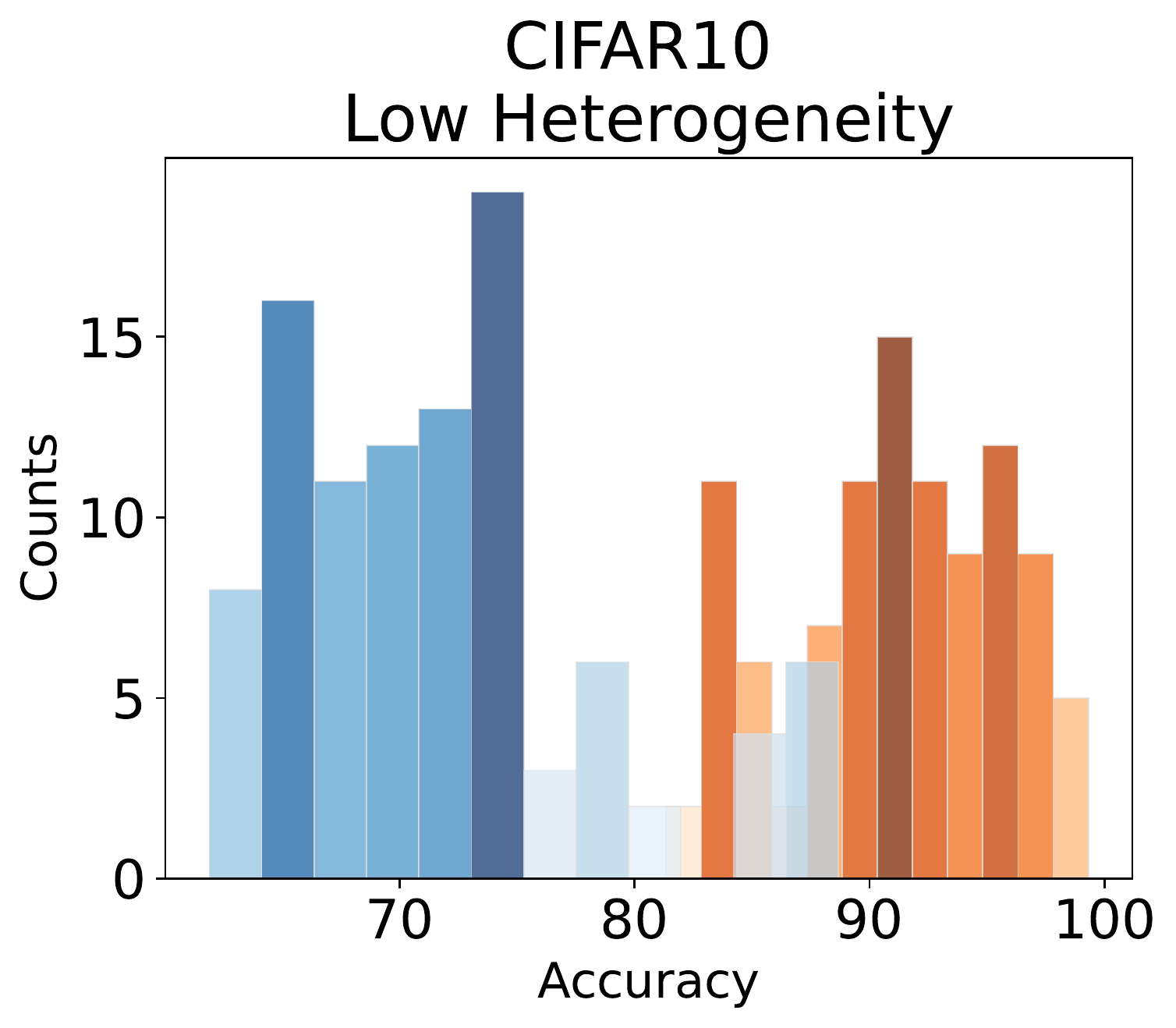} &
  \includegraphics[width=0.5\textwidth]{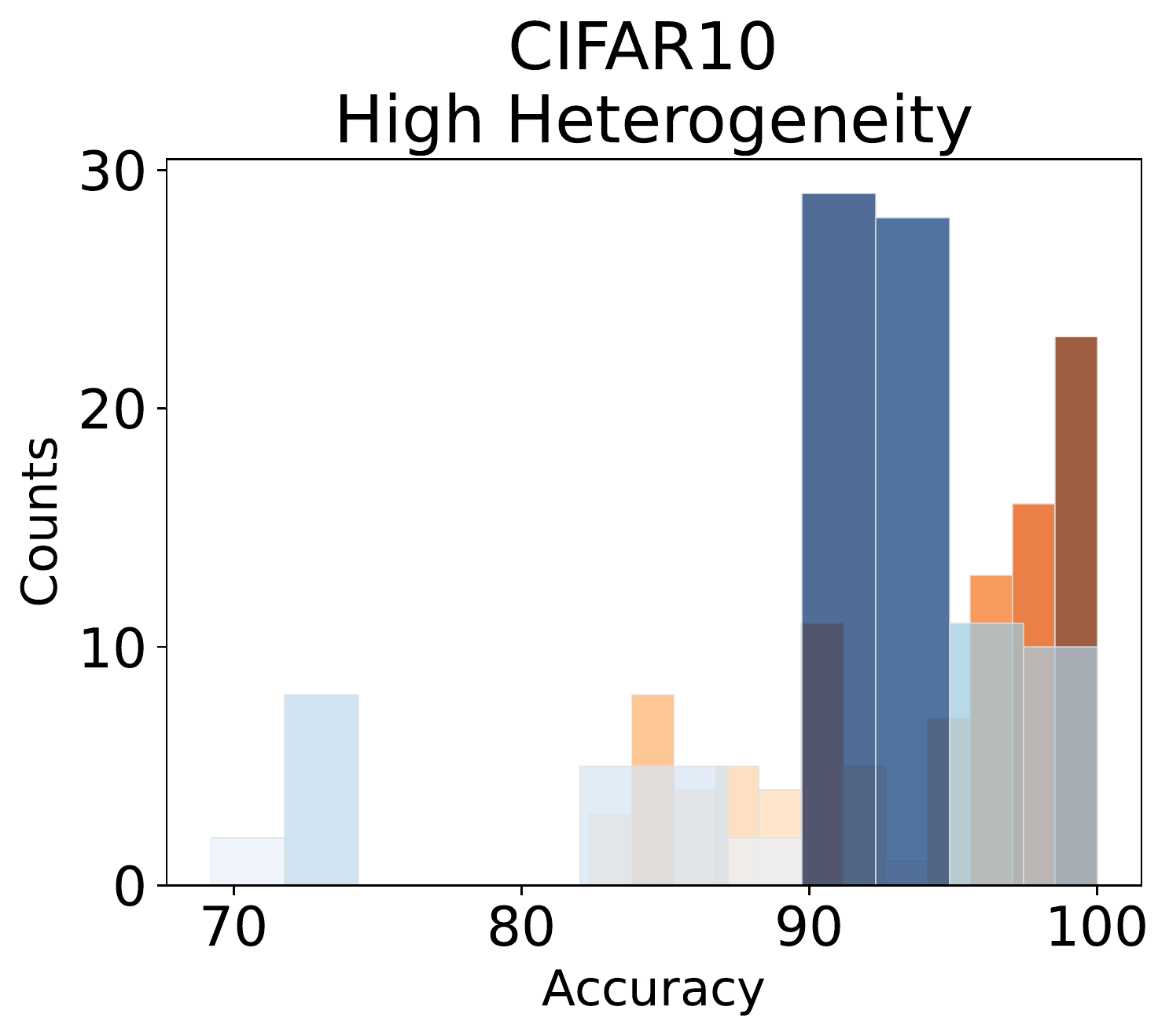} &
  \includegraphics[width=0.5\textwidth, trim={0 2mm 0 0}]{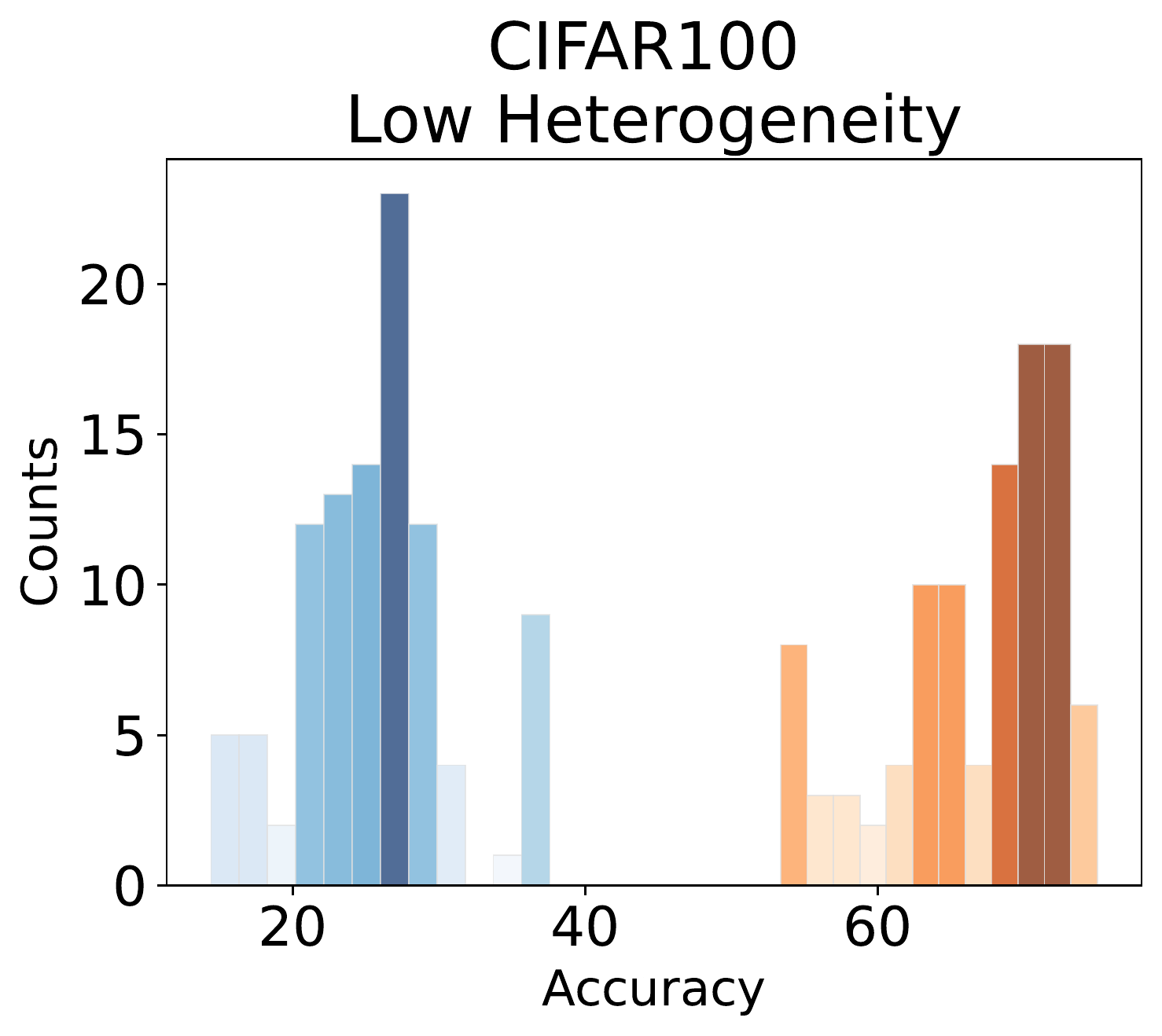} &
  \includegraphics[width=0.5\textwidth]{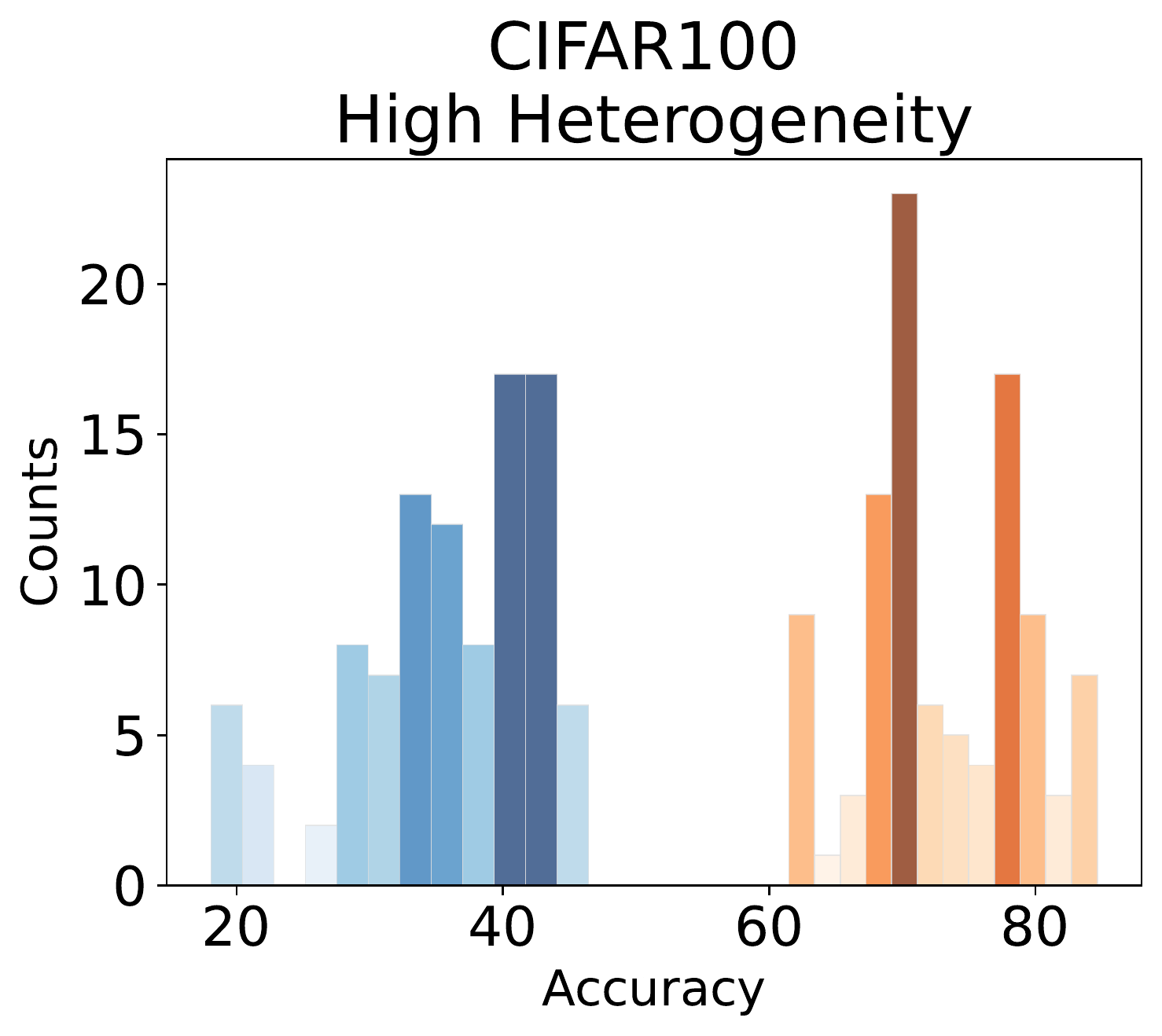} \\
\hspace{3mm}(i) &
  \hspace{5mm}(ii) &
  \hspace{7mm}(iii) &
  \hspace{7mm}(iv)
\end{tabular}%
}
    \caption{
    Local model accuracy distribution of \texttt{FedRolex} (orange color) vs. the smallest model-homogeneous case (blue color) for CIFAR-10 and CIFAR-100 under low and high data heterogeneity.} 
    %The shift in local accuracy distribution from when the system is homogeneous and adjusted to the smallest client device capacity compared to when the system uses \texttt{FedRolex}. The blue color map is the local model accuracy distribution for the homogeneous system and the orange one is \texttt{FedRolex}.}
    \label{fig:real_world_dist}
    \vspace{-3mm}
\end{figure}

% Ours vs. HeteroFL
%\textbf{Effects on Heterogeneity.}
%- IID
%- Low Non-IID
%- Medium Non-IID (optional)
%- High Non-IID

\vspace{-2mm}

\section{Conclusion}
\label{sec.conclusion}
\vspace{-2mm}
We presented \texttt{FedRolex}, a partial training (PT)-based model-heterogeneous FL approach that is able to train a global server model larger than the largest client model.
\texttt{FedRolex} proposed a rolling sub-model extraction scheme that enables parameters of the global server model to be evenly trained to mitigate client drift induced by model heterogeneity. We provided a theoretical statistical analysis on its advantage over Federated Dropout.
Our experimental results show that \texttt{FedRolex} consistently outperforms state-of-the-art PT-based methods across models and datasets at both small and large scales. Moreover, we demonstrated its performance on an emulated real-world device distribution and show \texttt{FedRolex} contributes to making FL more inclusive. %\texttt{FedRolex} however is not tested on other tasks like language modeling and is left for future work. Future directions, therefore, include validating its performance on different kinds of tasks.

\textbf{Limitations and Future works.}
In this work, we provided a statistical analysis of \texttt{FedRolex}. A full convergence analysis of \texttt{FedRolex} is not trivial and left for future work. 
In addition, the goal of this work is to train a  global server model using a federation of heterogeneous client models. Determining what models to deploy onto each client after the global server model is trained is a separate task, especially when the global server model is large. %We conjecture that this can be achieved by techniques such as compressing the global model into smaller ones to fit the resources of client devices. 
%However, this is not the focus of this work and is a separate study in itself. 
We will pursue it as our future work. 
%Our method only solves the inclusiveness but not the deployment. 

%The primary motivation of our paper is to present a scalable and practical FL framework that is both fair and efficient. Without sacrificing a lot of computational overhead, our method achieves high global model accuracy and is robust against data heterogeneity. We also show the performance of our algorithm in a wide range of setups to thoroughly study its underlying influencing factors. 

\vspace{-2mm}
%!TEX root = main.tex

\section{Acknowledgement}
\label{sec.ack}
\vspace{-2mm}
\noindent
We thank the reviewers for their helpful comments. 
%We also want to thank Jiahui Yu, Zheng Xu, and Chaoyang He for their constructive discussions. 
This work was partially supported by NSF PFI:BIC-1632051, CNS-1814551, DMS-2012439, and a Google Computing Platform (GCP) grant. 
%\Luyang{Added a thank to GCP credit here}.

%\input{impact}
\setcitestyle{numbers}
\bibliographystyle{unsrtnat} %unsrt
\bibliography{refs}

\appendix
%\clearpage
\newpage
\section{Appendix}

\subsection{Statistical Analysis}
\label{app:statistical_analysis}

\begin{lemma}\label{lemma1} Given $I$ indices, and one index is chosen at each round equally randomly. The expected number of rounds of choosing all indices at least once is 
\begin{align*}
    I\left({1\over I}+{1\over I-1}+\dots+{1\over 1}\right),
\end{align*}
which is the same as 
\begin{align*}
    I\int_0^\infty \left(1- (1-e^{-t})^I\right)dt.
\end{align*}
\end{lemma}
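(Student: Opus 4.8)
The plan is to prove the two displayed expressions separately and then observe that they coincide. The first is the classical coupon-collector expectation, which I would establish by decomposing the total waiting time into phases; the second is an integral representation that I would identify with the expected maximum of $I$ independent unit-rate exponential variables.

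First I would prove that the expected number of rounds equals $I\sum_{m=1}^I 1/m$. Call the indices ``coupons.'' Let $T_k$ be the number of rounds needed to acquire the $k$-th distinct index once $k-1$ distinct indices have already been seen. Conditioned on having $k-1$ distinct indices, each fresh round hits a new index with probability $(I-(k-1))/I$, independently of the past, so $T_k$ is geometric with that success probability and $\mathbb{E}[T_k]=I/(I-k+1)$. Since the total number of rounds is $T=\sum_{k=1}^I T_k$, linearity of expectation gives
\begin{align*}
\mathbb{E}[T]=\sum_{k=1}^I \frac{I}{I-k+1}=I\sum_{m=1}^I \frac1m,
\end{align*}
which is the first claimed formula.

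Next I would show $\int_0^\infty\bigl(1-(1-e^{-t})^I\bigr)\,dt=\sum_{m=1}^I 1/m$, so that multiplying by $I$ matches the first formula. The cleanest route is probabilistic: let $E_1,\dots,E_I$ be i.i.d.\ exponential with rate $1$ and $M=\max_i E_i$. Then $\mathbb{P}(M\le t)=(1-e^{-t})^I$, so by the tail formula for nonnegative variables, $\mathbb{E}[M]=\int_0^\infty \mathbb{P}(M>t)\,dt=\int_0^\infty\bigl(1-(1-e^{-t})^I\bigr)\,dt$. To evaluate $\mathbb{E}[M]$ I would use the same phase decomposition as above, now in continuous time: by memorylessness the minimum of the $I$ variables is exponential with rate $I$ (expectation $1/I$), after which the remaining $I-1$ variables are again fresh and contribute expectation $1/(I-1)$, and so on until the last is reached, giving $\mathbb{E}[M]=\sum_{m=1}^I 1/m$. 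Alternatively the integral can be evaluated directly by substituting $u=e^{-t}$, expanding $(1-u)^I$ by the binomial theorem, integrating term by term, and invoking the identity $\sum_{k=1}^I\binom{I}{k}(-1)^{k+1}/k=\sum_{m=1}^I 1/m$.

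I do not expect a serious obstacle here, since both halves reduce to the same harmonic-sum phase decomposition; the only point requiring care is the bridge between the discrete round count and the continuous integral. The exponential embedding makes this transparent, so the main thing to verify rigorously is the interchange implicit in the tail formula together with the memorylessness step yielding $\mathbb{E}[M]=\sum_{m=1}^I 1/m$, rather than grinding through the binomial identity.
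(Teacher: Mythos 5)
Your proof is correct, and its first half is essentially the paper's first proof: your sum of geometric waiting times $T_k$ with $\mathbb{E}[T_k]=I/(I-k+1)$ is the same phase decomposition the paper writes as the recursion $E(i)=E(i-1)+{I\over I+1-i}$. Where you genuinely diverge is in how you tie the round count to the integral. The paper's alternative proof embeds the discrete rounds into a rate-one Poisson process, writes the collection time as $X=\sum_{i=1}^N T_i$, and uses the independence of the round count $N$ from the interarrival times to conclude $\mathbb{E}[X]=\mathbb{E}[N]$; the integral then arises as the tail formula for $X=\max_j X_j$ with the $X_j$ independent exponentials of rate $1/I$. You bypass that discrete-to-continuous bridge entirely: you evaluate the integral on its own terms as $\mathbb{E}[\max_i E_i]$ for unit-rate exponentials, compute it via memorylessness of the successive order-statistic gaps (or the binomial identity $\sum_{k=1}^I\binom{I}{k}(-1)^{k+1}/k=\sum_{m=1}^I 1/m$), and conclude that both displayed expressions equal $I\sum_{m=1}^I 1/m$. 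Your route is more elementary, since it avoids the Wald-type identity $\mathbb{E}[X]=\mathbb{E}[N]$, which is the one delicate step in the paper's argument. What the paper's Poissonization buys---and the reason it is presented at all---is generalizability: for the ``each index at least $m$ times'' statement (Lemma~\ref{lemma2}), neither the coupon-collector decomposition nor your memorylessness evaluation of the expected maximum applies (Erlang variables are not memoryless, and there is no harmonic-sum closed form to verify against on both sides), so the identity $\mathbb{E}[N]=\mathbb{E}[X]$ is the only available bridge. Your proof fully settles Lemma~\ref{lemma1} but would not extend to that case.
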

\begin{proof}
We denote the expected number of rounds to choose exactly $i$ indices at least once as $E(i)$. Then we have $E(1)=1$, because, after the first round, one index is chosen. After the first round, the expected number of rounds to choose a new index is ${I\over I-1}$, because one of the remaining $I-1$ out of the total $I$ indices needs to be chosen. That is, $E(2)=E(1)+{I\over I-1}$. Similarly, we have 
\begin{align*}E(i)=E(i-1)+{I\over I+1-i},\qquad \forall i=2,\dots,I.\end{align*}  Thus, we have 
\begin{align*}
    E(I)= E(I-1)+I=E(I-2)+{I\over2}+{I\over1}=\dots=I\left({1\over I}+{1\over I-1}+\dots+{1\over 1}\right).
\end{align*}
The lemma is proved.
\end{proof}
It shows that the expected number of rounds to choose all indices at least once is $I\log (I)$ when $I\rightarrow\infty$. This proof can not be generalized to the case for choosing all indices at least $m$ times for $m\geq 2$. Therefore, we provide alternative proof for it~\citep[Example 5.17]{Ross14}.

\begin{proof}[Alternative proof of Lemma~\ref{lemma1}]
This proof considers picking the indices as Poisson processes. Assume that the Poisson process to choose one index has a rate $\lambda=1$. Since the index is chosen equally randomly, choosing the $j$th index also follows a Poisson process with a rate $1/I$ for any $j$ \citep[Proposition 5.2]{Ross14}. 
%We let $N_j$ denote the number of times index $j$ was chosen until time $t$, and $\{N_j(t)\}_{j=1}^I$ are independent Poisson processes. 
We let $X_j$ be the time to choose the first index $j$, and 
\begin{align}
    X=\max_{1\leq j \leq I}X_j
\end{align}
is the time all indices are chosen at least once. 
Since all $X_j$ are independent with rate $1/I$, we have 
\begin{align*}
    P\{X<t\} =& P\{\max\limits_{1\leq j\leq I}X_j<t\} = P\{X_j <t, \mbox{ for } j=1,\dots,I\}\\
    =& (1-e^{-t/I})^I.
\end{align*}
Therefore, we have 
\begin{align*}
    \mathbb{E}[X] =\int_0^\infty P\{x>t\}dt = \int_0^\infty \left(1-(1-e^{-t/I})^I\right)dt
\end{align*}

We let $N$ be the number of rounds to choose all indices at least one, and $T_i$ be the $i$th interarrival time of the Poisson process for choosing one index. Then we have 
\begin{align*}
    X=\sum_{i=1}^NT_i,
\end{align*}
and $T_i$ are independent. Thus we have 
\begin{align*}
    \mathbb{E}[X|N] = N\mathbb{E}[T_i] = N,
\end{align*}
and which gives 
\begin{align*}
    \mathbb{E}[X]=\mathbb{E}\{\mathbb{E}[X|N]\}=\mathbb{E}[N].
\end{align*}
Thus we have 
\begin{align*}
    \mathbb{E}[N]=\int_0^\infty \left(1- (1-e^{-t/I})^I\right)dt= I\int_0^\infty \left(1- (1-e^{-t})^I\right)dt.
\end{align*}
The lemma is proved. 
\end{proof}

Next, we will present the lemma for choosing each index at least $m$ times. 

\vspace{1mm}
\begin{lemma}\label{lemma2} Given $I$ indices, and one index is chosen at each round equally randomly. The expected number of rounds of choosing all indices at least $m$ times is 
\begin{align*}
    I\int_0^\infty \left(1- (1-S_m(t)e^{-t})^I\right)dt,
\end{align*}
where 
\begin{align}
    S_m(y):=1+y+{y^2\over 2!}+\dots+{y^{m-1}\over (m-1)!}=\sum_{l=0}^{m-1}{y^l\over l!}.
\end{align}
\end{lemma}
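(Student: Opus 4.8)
The plan is to mimic the Poissonization argument used in the alternative proof of Lemma~\ref{lemma1}, replacing ``first arrival'' by ``$m$th arrival'' throughout. First I would embed the discrete selection process into a Poisson process of rate $\lambda=1$, so that by the decomposition property \citep[Proposition 5.2]{Ross14} the selections of each individual index $j$ form independent Poisson processes of rate $1/I$. Letting $X_j$ denote the time at which index $j$ is selected for the $m$th time, the quantity
\begin{align*}
X=\max_{1\leq j\leq I}X_j
\end{align*}
is the time at which every index has been chosen at least $m$ times.

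The key step is to compute the marginal law of $X_j$. Since index $j$ is picked according to a rate-$1/I$ Poisson process, the number of times it has been selected by time $t$ is Poisson with mean $t/I$, and $X_j<t$ holds exactly when at least $m$ such selections have occurred. Hence
\begin{align*}
P\{X_j<t\}=1-\sum_{l=0}^{m-1}e^{-t/I}\frac{(t/I)^l}{l!}=1-S_m(t/I)\,e^{-t/I}.
\end{align*}
This is precisely where the polynomial $S_m$ enters: it is the truncated exponential series capturing the Poisson tail, and for $m=1$ it reduces to $S_1\equiv 1$, recovering the single-visit case of Lemma~\ref{lemma1}. Because the $X_j$ are independent, $P\{X<t\}=\bigl(1-S_m(t/I)e^{-t/I}\bigr)^{I}$, and therefore
\begin{align*}
\mathbb{E}[X]=\int_0^\infty P\{X>t\}\,dt=\int_0^\infty\Bigl(1-\bigl(1-S_m(t/I)e^{-t/I}\bigr)^{I}\Bigr)dt.
\end{align*}

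Finally I would convert the continuous time $X$ back into the discrete round count $N$ exactly as in Lemma~\ref{lemma1}: writing $X=\sum_{i=1}^N T_i$ with i.i.d.\ rate-$1$ interarrival times $T_i$ independent of which indices are drawn, the identity $\mathbb{E}[X\mid N]=N\,\mathbb{E}[T_i]=N$ yields $\mathbb{E}[N]=\mathbb{E}[X]$. A change of variables $t\mapsto It$ then transforms the integral into $I\int_0^\infty\bigl(1-(1-S_m(t)e^{-t})^{I}\bigr)dt$, as claimed. I expect the only genuinely delicate point to be the justification that $N$ may be treated as a stopping time uncorrelated with the interarrival times $T_i$, so that the conditioning step (Wald's identity) applies; the Poisson-tail computation producing $S_m$ is the conceptual heart of the argument, but it is a routine consequence of the Erlang/Gamma arrival-time distribution.
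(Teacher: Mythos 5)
Your proposal is correct and follows essentially the same Poissonization argument as the paper's proof: independent rate-$1/I$ Poisson processes per index, $X=\max_j X_j$ with $P\{X_j<t\}=1-S_m(t/I)e^{-t/I}$ from the Erlang/Poisson tail, and the identity $\mathbb{E}[N]=\mathbb{E}[X]$ via conditioning on $N$ (your noted independence of the index-choice sequence from the interarrival times is exactly what makes that step valid). Your write-up is in fact slightly more explicit than the paper's, which states the tail probability without deriving it.
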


\begin{proof}
We consider picking the indices as Poisson processes again. Assume that the Poisson process to choose one index has a rate $\lambda=1$. Since the index is chosen equally randomly, choosing the $j$th index also follows a Poisson process with a rate of $1/I$ for any $j$. 
%We let $N_j$ denote the number of times index $j$ was chosen until time $t$, and $\{N_j(t)\}_{j=1}^I$ are independent Poisson processes. 
We let $X_j$ be the time to choose index $j$ for the $m$th time, and 
\begin{align}
    X=\max_{1\leq j \leq I}X_j
\end{align}
is the time all indices are chosen at least $m$ times. 
Since all $X_j$ are independent with rate $1/I$, we have 
\begin{align*}
    P\{X<t\} =& P\{\max\limits_{1\leq j\leq I}X_j<t\} = P\{X_j <t, \mbox{ for } j=1,\dots,I\}\\
    =&(1-S_m(t/I)e^{-t/I})^I.
\end{align*}
Therefore, we have 
\begin{align*}
    \mathbb{E}[X] =\int_0^\infty P\{x>t\}dt.
\end{align*}

We let $N$ be the number of rounds to choose all indices at least $m$ times, and $T_i$ be the $i$th interarrival time of the Poisson process for choosing one index. Then we have 
\begin{align*}
    X=\sum_{i=1}^NT_i,
\end{align*}
and $T_i$ are independent. Thus we have 
\begin{align*}
    \mathbb{E}[X|N] = N\mathbb{E}[T_i] = N,
\end{align*}
and which gives 
\begin{align*}
    \mathbb{E}[X]=\mathbb{E}\{\mathbb{E}[X|N]\}=\mathbb{E}[N].
\end{align*}
Thus we have 
\begin{align*}
    \mathbb{E}[N]=\int_0^\infty \left(1- (1-S_m(t/I)e^{-t/I})^I\right)dt= I\int_0^\infty \left(1- (1-S_m(t)e^{-t})^I\right)dt.
\end{align*}
The lemma is proved. 
\end{proof}
It shows that the expected number of rounds to choose all indices at least once is $I\log (I)+I(m-1)\log\log I$ when $I\rightarrow\infty$~\citep{newman1960double}.

\subsection{Formal Definition of Selective Aggregation Scheme}
Formally speaking, let $\mathcal{M} \subset \mathcal{N}$ be the set of selected clients from the client pool from which the server pulls model parameters at round $j$. Let $\theta_{[i, k]}$ be the $k^{th}$ parameter of layer $i$ of the global model and $\theta_{m, [i, k]}$ be the $k^{th}$ parameter of layer $i$ of client $m$. We denote $\mathcal{M}_k\subset\mathcal{M}$ as the set of clients updating the $k^{th}$ parameter. The model parameters are aggregated as follows:
\begin{align}
    \theta_{[i, k]} = \frac{1}{\sum_{\substack{m\in\mathcal{M}_k}} p_m} \sum_{\substack{m\in\mathcal{M}_k}} p_m\theta_{m, [i, k]},
    \label{eqn:theta_update}
\end{align}
The client weight $p_m$ is assigned based on factors like the client model capacity, the number of data points a client has, etc. 
%However, there is no theoretical foundation in the literature to quantify the relative importance of a client update nor are there empirical studies that accurately study this. We consider this tangent to our goals in this study and so, 
Throughout the paper, unless otherwise stated, the weight of all clients is assumed to be the same, i.e, $p_m = 1/N$. %\footnote{The client weight scheme is a hyper-parameter of \texttt{FedRolex}. Please refer to Appendix \ref{app:weighting_schemes} for our ablation study on it}

\subsection{Ablation Study: Impact of Different Weighing Schemes}
\label{app:weighting_schemes}
\citep{cho2022heterogeneous} reported that weighting clients is important to improving model accuracy. Therefore, we did an ablation study and evaluated three client weighting schemes: (1) \textbf{model size-based weighting scheme}: client weight is proportional to the number of kernels in the model; (2) \textbf{model update-based weighting scheme}: client weight is proportional to the number of updates; and (3) \textbf{hybrid weighting scheme}: client weight is proportional to both (1) model size and (2) model update. 

Table~\ref{tab:weighting_schemes} lists the results. As shown, the performance of the three weighting schemes is not significantly better than the non-weighting scheme. 
%differs across datasets CIFAR-10 and CFAR-100. We have not found strong evidence to suggest consistent improvement in accuracy as a result of using a specific weighting scheme. 
Therefore, we used the non-weighting scheme in \texttt{FedRolex}.
%we only use the baseline rolling scheme without any weighting.
%These may be due to tuning issues or the fact that client weights are dynamic i.e., the weights change each communication round.

\begin{table}[!h]
\centering
\caption{Impact of weighting schemes on model accuracy under high data heterogeneity.}
\label{tab:weighting_schemes}
\resizebox{0.8\textwidth}{!}{%
\begin{tabular}{@{}lccc@{}}
\toprule
 & \textbf{Weighting Scheme} & \textbf{Local Model Accuracy} & \textbf{Global Model Accuracy} \\ \midrule
\multirow{4}{*}{CIFAR-10} & Non-Weighting & 95.95 (\textpm 0.81) & \textbf{69.44 (\textpm 1.50)} \\
 & Model Size-based & 95.98 (\textpm 0.67) & 69.09 (\textpm 1.42) \\
 & Model Update-based & 96.01 (\textpm 0.71) & 68.83 (\textpm 0.89) \\
 & Hybrid & \textbf{96.05 (\textpm 0.96)} & 68.78 (\textpm 0.89) \\ \midrule
\multirow{4}{*}{CIFAR-100} & Non-Weighting & \textbf{81.58 (\textpm 0.59)} & 56.57 (\textpm 0.15) \\
 & Model Size-based & 81.23 (\textpm 1.56) & \textbf{56.99 (\textpm 0.27)} \\
 & Model Update-based & 81.23 (\textpm 1.07) & 56.63 (\textpm 0.36) \\
 & Hybrid & 81.49 (\textpm 1.07) & 56.71 (\textpm 0.20) \\ \bottomrule
\end{tabular}%
}
\end{table}

\begin{figure}[b]
    \centering
\resizebox{\textwidth}{!}{%
\begin{tabular}{@{}cc@{}}
\includegraphics[width=0.49\textwidth, trim={0 6cm 0 6cm},clip]{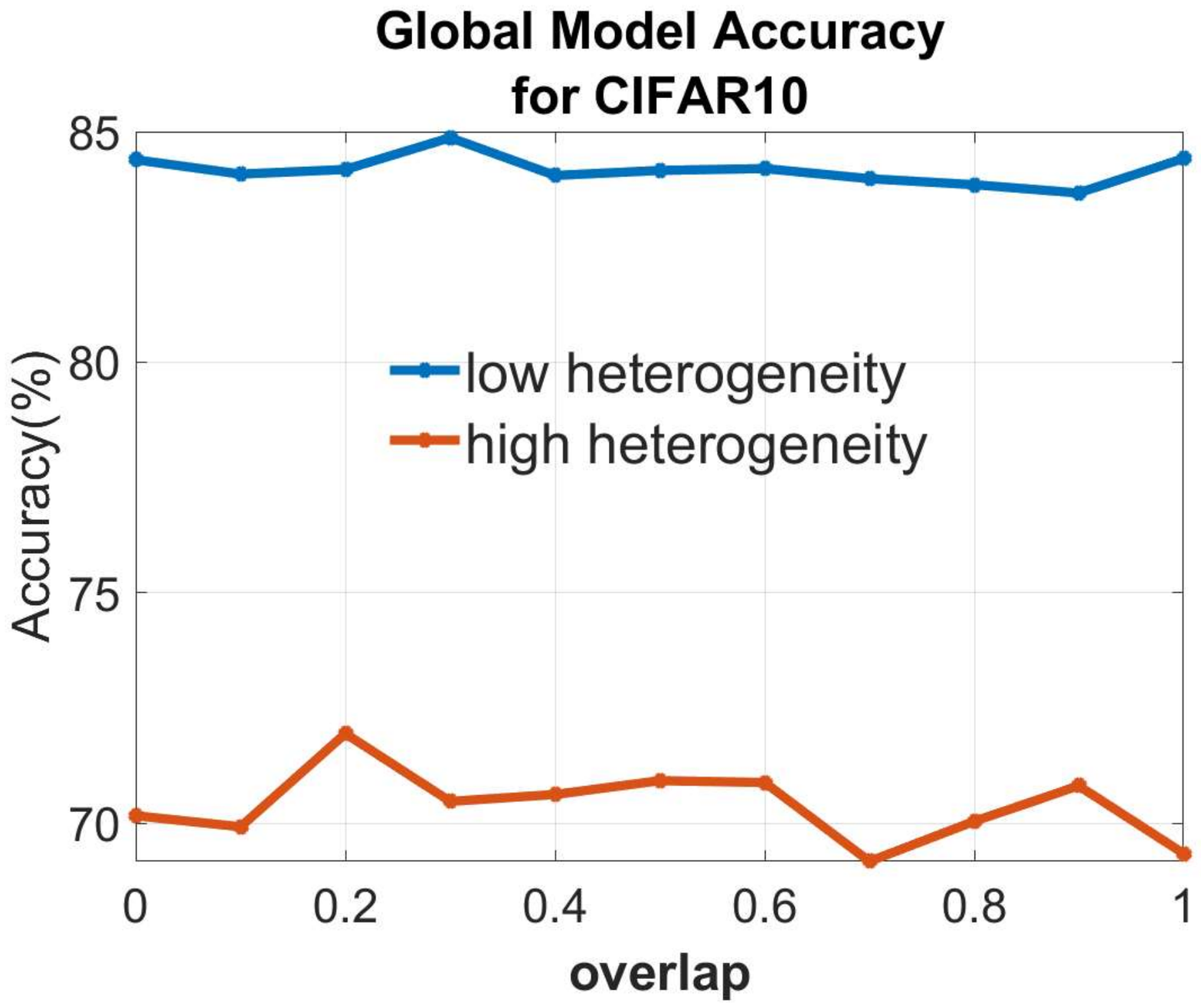} &
  \includegraphics[width=0.49\textwidth, trim={0 6cm 0 6cm},clip]{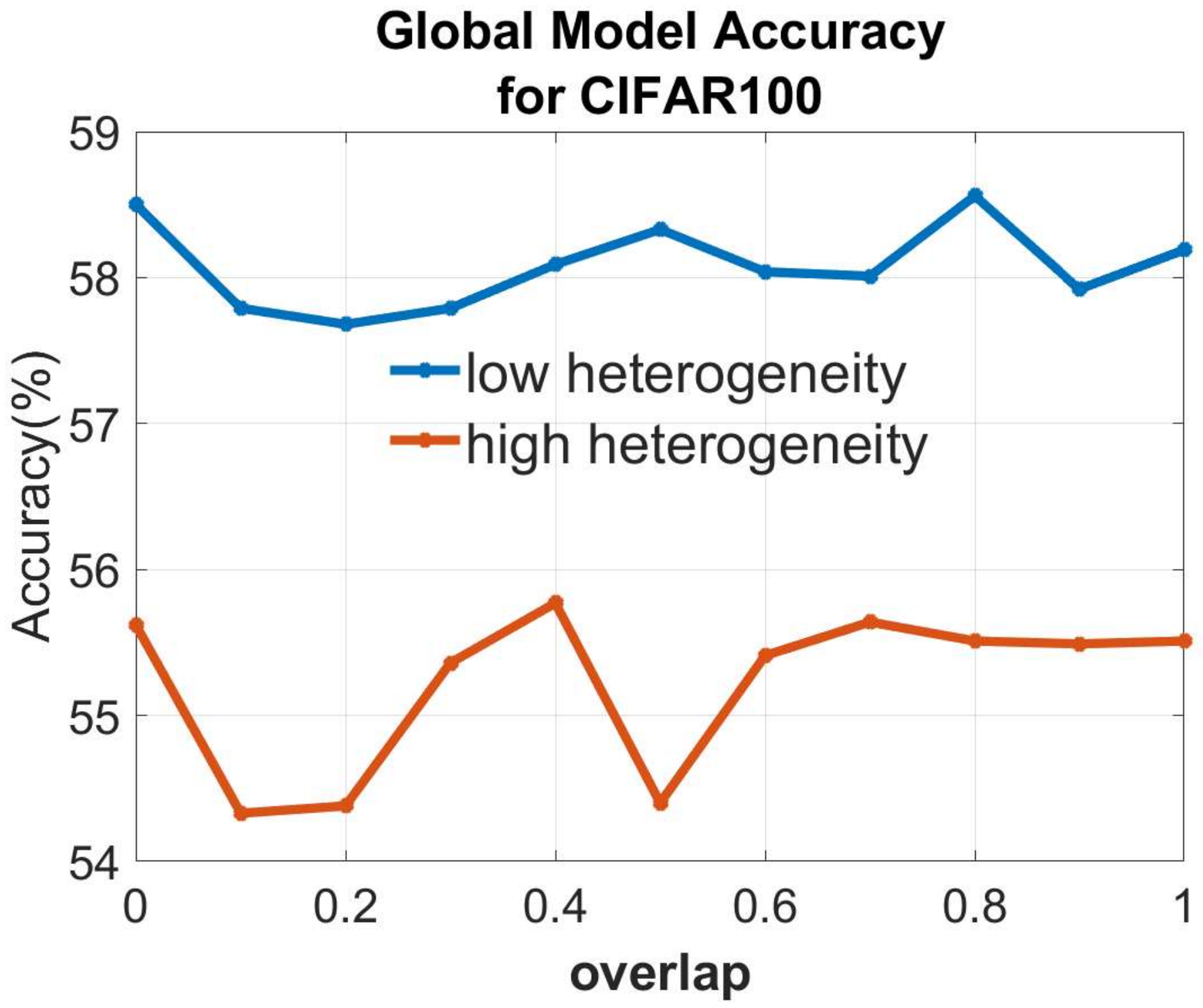} \\
(i) &
  (ii)
\end{tabular}%
}
    \caption{
    Impact of inter-round kernel overlap on global model accuracy under low and high data heterogeneity for (i) CIFAR-10 and (ii) CIFAR-100.}
    
    \label{fig:overlap_sweep}
\end{figure}

\subsection{Ablation Study: Impact of Overlapping Kernels}
\label{app:overlap}
We also studied the impact of overlapping kernels between rounds using ResNet-18 and CIFAR-10/CIFAR-100 as an example. Specifically, we extracted sub-models using a rolling window that advances and loops over all the kernels of each convolution layer in the global model in strides. Let the degree of overlap between each stride of the rolling window be $r \in [0,1]$. In each iteration, each convolution layer in the global model is {advanced} by $1+\lfloor \beta_n\left(1-r\right) K_i\rfloor$ where $\lfloor\;\cdot\;\rfloor$ is the floor function. In \texttt{FedRolex}, $r=1$, i.e., the kernels are advanced by $1$ from one iteration to the next iteration. 

Figure \ref{fig:overlap_sweep} shows the impact of different $r$ on global model accuracy. As shown, the value of $r$ does have some influence on the global model accuracy, but the impact is non-linear and inconsistent.

\subsection{Ablation Study: Impact of Client Participation Rate}
In our main paper, we followed prior arts \cite{diao2020heterofl, srivastava2014dropout, horvath2021fjord, arivazhagan2019federated, he2020group} and used a 10\% client participation rate. To examine the effect of client participation rate, we conducted experiments with both lower (5\%) and higher (20\%) client participation rates using CIFAR-10 as an example for \texttt{FedRolex}, HeteroFL and Federated Dropout. 

The results are summarized in Table \ref{tab:client_participation}. As shown, \texttt{FedRolex} consistently outperforms both Federated Dropout and HeteroFL across 5\%, 10\% and 20\% client participation rates. 

\begin{table}[h]
\centering
\caption{\rebuttal{Performance of FedRolex, HeteroFL, and Federated Dropout under different client participation rates.}}
\label{tab:client_participation}
\resizebox{0.8\linewidth}{!}{%
\rebuttal{
\begin{tabular}{llccc} 
\hline
 &  & \multicolumn{3}{c}{\textbf{Client Participation Rate}} \\ 
\cline{3-5}
&  & 5\% & 10\% & 20\% \\
\hline
\multirow{3}{*}{\textbf{CIFAR-10}} & HeteroFL & 48.43 (+/- 1.78) & 63.90 (+/-2.74) & 65.07 (+/- 2.17) \\
 & Federated Dropout & 42.06 (+/- 1.29) & 46.64 (+/-3.05) & 55.20 (+/- 4.64) \\
 & \texttt{FedRolex} & \textbf{57.90 (+/- 2.72)} & \textbf{69.44 (+/-1.50)} & \textbf{71.85 ( +/- 1.22)} \\
\hline
\end{tabular}
}
}
\end{table}

\subsection{Communication and Computation Costs of FedRolex}
\label{app:comm_cost}
To calculate the communication cost, we use the average size of the models sent by all the participating clients per round as the metric. To calculate the computation overhead, we calculate the FLOPs and numbers of parameters in the models of all the participating clients per round and take the average as the metric. To put these metrics in context, we also calculate the upper and lower bounds of the communication cost and computation overhead (i.e., all the clients were using the same largest model and smallest model, respectively). 

Table \ref{tab:resource_overheads} lists the results. As shown, compared to the upper bound, \texttt{FedRolex} significantly reduces the communication cost and computation overhead while being able to achieve comparable model accuracy. Compared to the lower bound, although \texttt{FedRolex} has higher communication cost and computation overhead, the model accuracy achieved is much higher than the lower bound. These results indicate that \texttt{FedRolex} is able to achieve comparable high model accuracy as the upper bound with much less communication cost and computation overhead.

\begin{table}[h]
\caption{\rebuttal{Computation and communication costs of FedRolex compared to upper and lower bounds represented by homogeneous settings with largest and smallest models respectively.}}
\label{tab:resource_overheads}
\resizebox{\columnwidth}{!}{%
\rebuttal{
\begin{tabular}{@{}lrrr@{}}
\toprule
 & \multicolumn{1}{l}{\textbf{Homogeneous (largest)}} & \multicolumn{1}{l}{\textbf{\textbf{FedRolex}}} & \multicolumn{1}{l}{\textbf{Homogeneous (smallest)}} \\ \midrule
Average Number of Parameters per Client (Million) & 11.1722 & 2.9781232  & 0.04451 \\
Average FLOPs per Client (Million)                & 557.656 & 149.048384 & 2.41318 \\
Average Model Size per Client (MB)                & 42.62   & 11.36      & 0.17    \\ \bottomrule
\end{tabular}%
}
}
\end{table}

\subsection{Experimental Setup Details}
\label{app:experimental_setup}

\textbf{Experimental Setup Details for Table \ref{tab:overall_performance}.}
%We tabulate the exact parameters we used to obtain the results
The experimental setup for PT-based methods is listed in Table \ref{tab:performance_table_setup}. The experimental setup for model-homogeneous baselines was slightly different from the PT-based methods and hence is listed separately in Table \ref{tab:performance_table_setup_homogeneous}.

\begin{table}[!ht]
\centering
\caption{Experimental setup details of PT-based methods in Table \ref{tab:overall_performance}  on \rebuttal{CIFAR-10, CIFAR-100 and Stack Overflow. 
%\Luyang{Q1: Can we also try Adam optimizer on Stackoverflow? Most recent papers use Adam as optimizer instead of SGD, e.g. ~\citep{wang2021field, ro2022scaling}}}
%\Luyang{Q2: Did you try sweeping the learning rate here? Can we should some learning rate grid search results?} \Samiul{A1:I had tried FedOPT with Adam optimizers for CIFAR-10 and CIFAR-100. They did not work well for CIfar-100 but worked slightly better for CIFAR-10. We decided on only using SGD thereon so long as other methods are compared with SGD.; A2: Sweeping learning rate is a bit complicated here as we use learning rate scheduling and that needs to be updated every time we change lr. I will discuss with MZ about this.}\Luyang{Q1: SGD works good with CIFAR dataset, but for language models, most existing works show Adam outperform SGD pretty much, so it might be worth to try Adam for Stackoverflow}
}}
\label{tab:performance_table_setup}
\resizebox{0.8\linewidth}{!}{%
\rebuttal{
\begin{tabular}{llccc} 
\toprule
\multicolumn{1}{c}{} &  & \textbf{CIFAR-10} & \textbf{CIFAR-100} & \textbf{Stack Overflow} \\ 
\cmidrule{1-5}
Local Epoch &  & 1 & 1 & 1 \\
Cohort SIze &  & 10 & 10 & 200 \\
Batch Size &  & 10 & 24 & 24 \\
Initial Learning Rate  &  & 2.00E-04 & 1.00E-04 & 2.00E-04 \\
\multirow{2}{*}{Decay Schedule} & High Data Heterogeneity & 800, 1500 & 1000, 1500 & \multirow{2}{*}{600, 800} \\
 & Low Data Heterogeneity & 800, 1250 & 1000, 1500 &  \\
Decay Factor &  & 0.1 & 0.1 & 0.1 \\
\multirow{2}{*}{Communication Rounds~} & High Data Heterogeneity & 2500 & 3500 & \multirow{2}{*}{1200} \\
 & Low Data Heterogeneity & 2000 & 3500 &  \\
Optimizer &  & SGD & SGD & SGD \\
Momentum &  & 0.9 & 0.9 & 0.9 \\
Weight Decay &  & 5.00E-04 & 5.00E-04 & 5.00E-04 \\
\bottomrule
\end{tabular}
}
}
\end{table}

\begin{table}
\centering
\caption{Experimental setup details of model-homogeneous baselines in Table \ref{tab:overall_performance} on CIFAR-10 and CIFAR-100 and Stack Overflow.}
\label{tab:performance_table_setup_homogeneous}
\resizebox{0.8\textwidth}{!}{%
\rebuttal{
\begin{tabular}{@{}llccc@{}}
\toprule
\multicolumn{1}{c}{\textbf{}}               & \textbf{}          & \textbf{CIFAR-10} & \textbf{CIFAR-100} & \textbf{Stack Overflow} \\ \midrule
Local Epoch           &                   & 1         & 1          & 1        \\
Cohort Size           &                   & 10        & 10         & 200      \\
Batch Size            &                   & 10        & 24         & 24       \\
Initial Learning Rate &                   & 2.00E-04  & 1.00E-04   & 2.00E-04 \\
\multirow{2}{*}{Decay Schedule}             & High Data Heterogeneity & 500, 1000         & 1000, 1500         & \multirow{2}{*}{300}   \\
                      & Low Data Heterogeneity & 500, 1000 & 1000, 1500 &          \\
Decay Factor          &                   & 0.1       & 0.1        & 0.1      \\
\multirow{2}{*}{Communication Rounds$\sim$} & High Data Heterogeneity & 1250              & 3500               & \multirow{2}{*}{1000}  \\
                      & Low Data Heterogeneity & 1500      & 3500       &          \\
Optimizer             &                   & SGD       & SGD        & SGD      \\
Momentum              &                   & 0.9       & 0.9        & 0.9      \\
Weight Decay          &                   & 5.00E-04  & 5.00E-04   & 5.00E-04 \\ \bottomrule
\end{tabular}
}
}
\end{table}

\textbf{Experimental Setup Details for Figure \ref{fig:strongweaklearners}.}
%In this experiment, the proportion of large-capacity to small-capacity clients was varied. However, as the proportion tended towards homogeneity, the setup for each configuration had to be different to achieve the desired accuracy and prevent over-fitting. 
The experimental setup details are tabulated in Tables \ref{tab:proportion_figure_setup_p1} and \ref{tab:proportion_figure_setup_p2}.

\begin{table}[!h]
\centering
\caption{Experimental setup for results shown in Figure \ref{fig:strongweaklearners}. $\rho$ between $0.0$ and $0.5$ in $0.1$ increments.}
\label{tab:proportion_figure_setup_p1}
\resizebox{0.9\linewidth}{!}{%
\begin{tabular}{llllccccc} 
\toprule
\multirow{2}{*}{Dataset} & \multicolumn{2}{c}{\multirow{2}{*}{$\rho$}} &  & \multicolumn{1}{l}{\multirow{2}{*}{0.0}} & \multirow{2}{*}{0.1} & \multirow{2}{*}{0.2} & \multirow{2}{*}{0.3} & \multirow{2}{*}{0.4} \\
 & \multicolumn{2}{c}{} &  & \multicolumn{1}{l}{} &  &  &  &  \\ 
\hline
\multirow{4}{*}[-1.5em]{CIFAR-10} & \multirow{2}{*}{\begin{tabular}[c]{@{}l@{}}High\\ Heterogeneity\end{tabular}} & \begin{tabular}[c]{@{}l@{}}Decay \\ Schedule\end{tabular} &  & 500, 1000 & 500, 1000 & 500, 1000 & 700, 1200 & 700, 1200 \\
 &  & \begin{tabular}[c]{@{}l@{}}Communication \\ Rounds\end{tabular} &  & 1250 & 1250 & 1250 & 1500 & 1500 \\
 & \multirow{2}{*}{\begin{tabular}[c]{@{}l@{}}Low \\ Heterogeneity\end{tabular}} & \begin{tabular}[c]{@{}l@{}}Decay \\ Schedule\end{tabular} &  & 500, 1000 & 500, 1000 & 500, 1000 & 700, 1200 & 700, 1200 \\
 &  & \begin{tabular}[c]{@{}l@{}}Communication \\ Rounds\end{tabular} &  & 1250 & 1250 & 1250 & 1500 & 1500 \\ 
\hline
\multirow{4}{*}[-1.5em]{CIFAR-100} & \multirow{2}{*}{\begin{tabular}[c]{@{}l@{}}High \\ Heterogeneity\end{tabular}} & \begin{tabular}[c]{@{}l@{}}Decay \\ Schedule\end{tabular} &  & 1000, 1500 & 1000, 1500 & 1000, 1500 & 1000, 1500 & 1000, 1500 \\
 &  & \begin{tabular}[c]{@{}l@{}}Communication \\ Rounds\end{tabular} &  & 2000 & 2000 & 2000 & 2000 & 2000 \\
 & \multirow{2}{*}{\begin{tabular}[c]{@{}l@{}}Low \\ Heterogeneity\end{tabular}} & \begin{tabular}[c]{@{}l@{}}Decay \\ Schedule\end{tabular} &  & 1000, 1500 & 1000, 1500 & 1000, 1500 & 1000, 1500 & 1000, 1500 \\
 &  & \begin{tabular}[c]{@{}l@{}}Communication \\ Rounds\end{tabular} &  & 2000 & 2000 & 2000 & 2000 & 2000 \\
 \hline
\multirow{4}{*}{Stack Overflow} & \multirow{2}{*}{\begin{tabular}[c]{@{}l@{}}High \\ Heterogeneity\end{tabular}} & \begin{tabular}[c]{@{}l@{}}Decay \\ Schedule\end{tabular} &  & 800 & 800 & 800 & 800 & 800 \\
 &  & \begin{tabular}[c]{@{}l@{}}Communication \\ Rounds\end{tabular} &  & 1500 & 1500 & 1500 & 1500 & 1500 \\
 & \multirow{2}{*}{\begin{tabular}[c]{@{}l@{}}Low \\ Heterogeneity\end{tabular}} & \begin{tabular}[c]{@{}l@{}}Decay \\ Schedule\end{tabular} &  & 800 & 800 & 800 & 800 & 800 \\
 &  & \begin{tabular}[c]{@{}l@{}}Communication \\ Rounds\end{tabular} &  & 1500 & 1500 & 1500 & 1500 & 1500 \\
\bottomrule
\end{tabular}
}
\end{table}

\begin{table}[!h]
\centering
\caption{Experimental setup for results shown in Figure \ref{fig:strongweaklearners}. $\rho$ between $0.5$ and $1.0$ in $0.1$ increments.}
\label{tab:proportion_figure_setup_p2}
\resizebox{0.9\linewidth}{!}{%
\begin{tabular}{llllccccc} 
\toprule
 & \multicolumn{2}{c}{\multirow{2}{*}{$\rho$}} &  & \multicolumn{1}{l}{\multirow{2}{*}{0.6}} & \multirow{2}{*}{0.7} & \multirow{2}{*}{0.8} & \multirow{2}{*}{0.9} & \multirow{2}{*}{1.0} \\
Dataset & \multicolumn{2}{c}{} &  & \multicolumn{1}{l}{} &  &  &  &  \\ 
\hline
\multirow{4}{*}{CIFAR-10} & \multirow{2}{*}{\begin{tabular}[c]{@{}l@{}}High\\ Heterogeneity\end{tabular}} & \begin{tabular}[c]{@{}l@{}}Decay \\ Schedule\end{tabular} &  & 700, 1200 & 700, 1200 & 500, 1000 & 500, 1000 & 500, 1000 \\
 &  & \begin{tabular}[c]{@{}l@{}}Communication \\ Rounds\end{tabular} &  & 1500 & 1500 & 1250 & 1250 & 1250 \\
 & \multirow{2}{*}{\begin{tabular}[c]{@{}l@{}}Low \\ Heterogeneity\end{tabular}} & \begin{tabular}[c]{@{}l@{}}Decay \\ Schedule\end{tabular} &  & 700, 1200 & 700, 1200 & 500, 1000 & 500, 1000 & 500, 1000 \\
 &  & \begin{tabular}[c]{@{}l@{}}Communication \\ Rounds\end{tabular} &  & 1500 & 1500 & 1250 & 1250 & 1250 \\ 
\hline
\multirow{4}{*}{CIFAR-100} & \multirow{2}{*}{\begin{tabular}[c]{@{}l@{}}High \\ Heterogeneity\end{tabular}} & \begin{tabular}[c]{@{}l@{}}Decay \\ Schedule\end{tabular} &  & 1000, 1500 & 1000, 1500 & 1000, 1500 & 1000, 1500 & 1000, 1500 \\
 &  & \begin{tabular}[c]{@{}l@{}}Communication \\ Rounds\end{tabular} &  & 2000 & 2000 & 2000 & 2000 & 2000 \\
 & \multirow{2}{*}{\begin{tabular}[c]{@{}l@{}}Low \\ Heterogeneity\end{tabular}} & \begin{tabular}[c]{@{}l@{}}Decay \\ Schedule\end{tabular} &  & 1000, 1500 & 1000, 1500 & 1000, 1500 & 1000, 1500 & 1000, 1500 \\
 &  & \begin{tabular}[c]{@{}l@{}}Communication \\ Rounds\end{tabular} &  & 2000 & 2000 & 2000 & 2000 & 2000 \\ 
\hline
\multirow{4}{*}{Stack Overflow} & \multirow{2}{*}{\begin{tabular}[c]{@{}l@{}}High \\ Heterogeneity\end{tabular}} & \begin{tabular}[c]{@{}l@{}}Decay \\ Schedule\end{tabular} &  & 800 & 800 & 800 & 800 & 800 \\
 &  & \begin{tabular}[c]{@{}l@{}}Communication \\ Rounds\end{tabular} &  & 1500 & 1500 & 1500 & 1500 & 1500 \\
 & \multirow{2}{*}{\begin{tabular}[c]{@{}l@{}}Low \\ Heterogeneity\end{tabular}} & \begin{tabular}[c]{@{}l@{}}Decay \\ Schedule\end{tabular} &  & 800 & 800 & 800 & 800 & 800 \\
 &  & \begin{tabular}[c]{@{}l@{}}Communication \\ Rounds\end{tabular} &  & 1500 & 1500 & 1500 & 1500 & 1500 \\
\bottomrule
\end{tabular}
}
\end{table}

\textbf{Experimental Setup Details for Figure \ref{fig:server_model_scaling}.}
%In this experiment, the size of the server model in proportion to the client models was varied. 
The experimental setup details are tabulated in Table \ref{tab:server_model_scaling_setup}.

\begin{table}[!h]
\centering
\caption{Experimental setup for results shown in Figure \ref{fig:server_model_scaling}}
\label{tab:server_model_scaling_setup}
\resizebox{0.8\linewidth}{!}{%
\begin{tabular}{llllcccc} 
\toprule
\multirow{2}{*}{Dataset} & \multicolumn{2}{c}{\multirow{2}{*}{$\gamma$}} &  & \multicolumn{1}{l}{\multirow{2}{*}{2}} & \multirow{2}{*}{4} & \multirow{2}{*}{8} & \multirow{2}{*}{16} \\
 & \multicolumn{2}{c}{} &  & \multicolumn{1}{l}{} &  &  &  \\ 
\hline
\multirow{4}{*}{CIFAR-10} & \multirow{2}{*}{\begin{tabular}[c]{@{}l@{}}High\\ Heterogeneity\end{tabular}} & \begin{tabular}[c]{@{}l@{}}Decay \\ Schedule\end{tabular} &  & 800, 1200 & 800, 1200 & 800, 1200 & 800, 1200 \\
 &  & \begin{tabular}[c]{@{}l@{}}Communication \\ Rounds\end{tabular} &  & 1500 & 1500 & 1500 & 1500 \\
 & \multirow{2}{*}{\begin{tabular}[c]{@{}l@{}}Low \\ Heterogeneity\end{tabular}} & \begin{tabular}[c]{@{}l@{}}Decay \\ Schedule\end{tabular} &  & 800, 1200 & 800, 1200 & 800, 1200 & 800, 1200 \\
 &  & \begin{tabular}[c]{@{}l@{}}Communication \\ Rounds\end{tabular} &  & 1500 & 1500 & 1500 & 1500 \\ 
\hline
\multirow{4}{*}{CIFAR-100} & \multirow{2}{*}{\begin{tabular}[c]{@{}l@{}}High \\ Heterogeneity\end{tabular}} & \begin{tabular}[c]{@{}l@{}}Decay \\ Schedule\end{tabular} &  & 800, 1200 & 800, 1200 & 800, 1200 & 800, 1200 \\
 &  & \begin{tabular}[c]{@{}l@{}}Communication \\ Rounds\end{tabular} &  & 1500 & 1500 & 1500 & 1500 \\
 & \multirow{2}{*}{\begin{tabular}[c]{@{}l@{}}Low \\ Heterogeneity\end{tabular}} & \begin{tabular}[c]{@{}l@{}}Decay \\ Schedule\end{tabular} &  & 800, 1200 & 800, 1200 & 800, 1200 & 800, 1200 \\
 &  & \begin{tabular}[c]{@{}l@{}}Communication \\ Rounds\end{tabular} &  & 1500 & 1500 & 1500 & 1500 \\ 
\hline
\multirow{4}{*}{Stack Overflow} & \multirow{2}{*}{\begin{tabular}[c]{@{}l@{}}High \\ Heterogeneity\end{tabular}} & \begin{tabular}[c]{@{}l@{}}Decay \\ Schedule\end{tabular} &  & 800 & 800 & 800 & 800 \\
 &  & \begin{tabular}[c]{@{}l@{}}Communication \\ Rounds\end{tabular} &  & 1500 & 1500 & 1500 & 1500 \\
 & \multirow{2}{*}{\begin{tabular}[c]{@{}l@{}}Low \\ Heterogeneity\end{tabular}} & \begin{tabular}[c]{@{}l@{}}Decay \\ Schedule\end{tabular} &  & 800 & 800 & 800 & 800 \\
 &  & \begin{tabular}[c]{@{}l@{}}Communication \\ Rounds\end{tabular} &  & 1500 & 1500 & 1500 & 1500 \\
\bottomrule
\end{tabular}
}
\end{table}

\begin{figure}[!htbp]

\centering
\begin{tabular}{@{}cc@{}}
\centering
\resizebox{0.3\linewidth}{!}{
    \begin{tikzpicture}
 
\pie{6/$1\times$,
    10/$\nicefrac{1}{2}\times$,
    11/$\nicefrac{1}{4}\times$,
    18/$\nicefrac{1}{8}\times$,
    55/$\nicefrac{1}{16}\times$}
 
\end{tikzpicture}} &
\resizebox{0.45\linewidth}{!}{
\begin{tabular}{@{}cl@{}}
\toprule
\multicolumn{1}{c}{\textbf{Model Capacity}} & \textbf{Annual Household Income} \\ \midrule
$\nicefrac{1}{16}\times$                 & $<\$75,000$           \\
$\nicefrac{1}{8}\times$                  & $\$75,000-\$100,000$           \\
$\nicefrac{1}{4}\times$                  & $\$100,000-\$150,000$          \\
$\nicefrac{1}{2}\times$                 & $\$150,000-\$200,000$          \\
$1\times$                                & $>\$200,000$     \\ \bottomrule
\vspace{5cm}
\end{tabular}} 
\end{tabular}
\vspace{-3.8cm}
    \caption{Mapping between real-world annual household income and model capacity.}
    \label{fig:my_label}
\end{figure}

\begin{table}[!htbp]
\centering
\caption{Experimental setup for Table \ref{tab:real_world_results} for CIFAR-10, CIFAR-100 \rebuttal{and Stack Overflow.} }
\label{tab:real_world_table_setup}
\resizebox{0.8\linewidth}{!}{%
\rebuttal{
\begin{tabular}{llccc} 
\toprule
\multicolumn{1}{c}{} &  & CIFAR-10 & CIFAR-100 & Stack Overflow \\ 
\cmidrule{1-5}
Local Epoch &  & 1 & 1 & 1 \\
Cohort SIze &  & 10 & 10 & 200 \\
Batch Size &  & 10 & 24 & 24 \\
Initial Learning Rate &  & 2.00E-04 & 1.00E-04 & 2.00E-04 \\
\multirow{2}{*}{Decay Schedule} & High Heterogeneity & 800, 1500 & 1000, 1500 & \multirow{2}{*}{600, 800} \\
 & Low Heterogeneity & 800, 1250 & 1000, 1500 &  \\
Decay Factor &  & 0.1 & 0.1 & 0.1 \\
\multirow{2}{*}{Communication Rounds~} & High Heterogeneity & 2500 & 3500 & \multirow{2}{*}{1200} \\
 & Low Heterogeneity & 2000 & 3500 &  \\
Optimizer &  & SGD & SGD & SGD \\
Momentum &  & 0.9 & 0.9 & 0.9 \\
Weight Decay &  & 5.00E-04 & 5.00E-04 & 5.00E-04 \\
\bottomrule
\end{tabular}
}
}
\end{table}

% \begin{figure}[htbp]
%     \centering
%     \resizebox{0.45\linewidth}{!}{
%     \begin{tikzpicture}
 
% \pie{6/$1\times$,
%     10/$\nicefrac{1}{2}\times$,
%     11/$\nicefrac{1}{4}\times$,
%     18/$\nicefrac{1}{8}\times$,
%     55/$\nicefrac{1}{16}\times$}
 
% \end{tikzpicture}}
% \resizebox{0.45\linewidth}{0.45\linewidth}{
% \begin{tabular}{@{}lc@{}}
% \toprule
% \multicolumn{1}{c}{\textbf{Model Size}} & \textbf{Annual Household Income} \\ \midrule
% $1\times$                               & \$\textless \$75,000\$           \\
% $\nicefrac{1}{2}\times$                 & \$\$75,000-\$100,000\$           \\
% $\nicefrac{1}{4}\times$                 & \$\$100,000-\$150,000\$          \\
% $\nicefrac{1}{8}\times$                 & \$\$150,000-\$200,000\$          \\
% $\nicefrac{1}{16}\times$                & \$ \$200,000\textgreater{}\$     \\ \bottomrule
% \end{tabular}}

%     \caption{Distribution of model capacities in real-world scenario proportioned according to income}
%     \label{fig:my_label}
% \end{figure}[]{}

\newpage
\subsection{Algorithm Pseudocodes}
\label{app:algorithm_pseudocode}
The pseudocodes for HeteroFL and Federated Dropout are given in Algorithms \ref{algo:HeteroFL} and \ref{algo:FedDropout} respectively. Their differences from \texttt{FedRolex} are marked using blue color. 

\begin{algorithm}[!h]
\caption{\textbf{HeteroFL}}
\label{algo:HeteroFL}
    \SetKwFunction{proc}{clientStep}
    \SetKwInOut{Input}{Input}
    \SetKwInOut{Output}{Output}

    % \underline{function Euclid} $(a,b)$\;
    Initialization ; $\theta^{(0)}$, $\mathcal{N}$\\
    \Input{$D_n\; \beta_n\; \forall n \in \mathcal{N}$, }
    \Output{$\theta^{J}$}
    \underline{Server Executes}\\    
    \For{$j\gets0$ \KwTo $J-1$}{
     Sample subset $\mathcal{M}$ from $\mathcal{N}$\\
     Broadcast $\theta^{(j)}_{m, [i\;;\; { 0, 1,\; ...\;\lfloor\beta_n K_i\rfloor-1}] } \forall i \;\text{and}\; m \in \mathcal{M}$\\
     \For{\textbf{each} client $m \in \mathcal{M}$}{
         \proc{$\theta^{(j)}_m$, $D_m$}
     }
     Aggregate $\theta^{(j+1)}_{[i, k]}$ according to Equation \eqref{eqn:theta_update} \\
    }
     \SetKwProg{step}{Subroutine}{}{}
     \step{\proc{$\theta^{(j)}_n$, $D_n$}}{
     $m_n \longleftarrow len(D_n)$\\
     \For{$k\gets0$ \KwTo $m_n$}{
        $\theta_n \longleftarrow \theta_n - \eta \nabla l(\theta_n; d_{n,k})$
        }
        return $\theta_n$
     }
\end{algorithm}

\begin{algorithm}[!h]
\caption{\textbf{Federated Dropout}}
\label{algo:FedDropout}
    \SetKwFunction{proc}{clientStep}
    \SetKwInOut{Input}{Input}
    \SetKwInOut{Output}{Output}

    % \underline{function Euclid} $(a,b)$\;
    Initialization ; $\theta^{(0)}$, $\mathcal{N}$\\
    \Input{$D_n\; \beta_n\; \forall n \in \mathcal{N}$, }
    \Output{$\theta^{J}$}
    \underline{Server Executes}\\    
    \For{$j\gets0$ \KwTo $J-1$}{
     Sample subset $\mathcal{M}$ from $\mathcal{N}$\\
     Broadcast $\theta^{(j)}_{m, [i\;;\; {k_1, ..., k_{\lfloor\beta_n K_i\rfloor}}] } \forall i  \;\text{and}\; m \in \mathcal{M}$\\
     \For{\textbf{each} client $m \in \mathcal{M}$}{
         \proc{$\theta^{(j)}_m$, $D_m$}
     }
     Aggregate $\theta^{(j+1)}_{[i, k]}$ according to Equation \eqref{eqn:theta_update}\\
    }
     \SetKwProg{step}{Subroutine}{}{}
     \step{\proc{$\theta^{(j)}_n$, $D_n$}}{
     $m_n \longleftarrow len(D_n)$\\
     \For{$k\gets0$ \KwTo $m_n$}{
        $\theta_n \longleftarrow \theta_n - \eta \nabla l(\theta_n; d_{n,k})$
        }
        return $\theta_n$
     }
\end{algorithm}

\end{document}